\newcommand{\etc}{\textit{etc.}}
\newcommand{\eg}{\textit{e.g. }}
\newcommand{\ie}{\textit{i.e. }}
\newcommand{\calA}{\mathcal{A}}
\newcommand{\calB}{\mathcal{B}}
\newcommand{\calJ}{\mathcal{J}}
\newcommand{\calL}{\mathcal{L}}
\newcommand{\calP}{\mathcal{P}}
\newcommand{\calS}{\mathcal{S}}
\newcommand{\AF}{\mathit{AF}}
\newcommand{\Ag}{\mathit{Ag}}
\newcommand{\comp}[1]{\mathit{Comp(#1)}}
\newcommand{\stab}[1]{\mathit{Stab(#1)}}
\newcommand{\semi}[1]{\mathit{Semi(#1)}}
\newcommand{\pref}[1]{\mathit{Pref(#1)}}
\newcommand{\grnd}[1]{\mathit{Grnd(#1)}}
\newcommand{\myin}{\mathtt{in}}
\newcommand{\myout}{\mathtt{out}}
\newcommand{\myundec}{\mathtt{undec}}
\newcommand{\calLAP}{\calL\calA\calP}
\newtheorem{remark}{Remark}
\newtheorem{theorem}{\bf Theorem}
\newtheorem{lemma}{\bf Lemma}
\newtheorem{proposition}{\bf Proposition}
\newtheorem{corollary}{\bf Corollary}
\newtheorem{definition}{\bf Definition}
\newtheorem{example}{\bf Example}
\newtheorem*{acknowledgment}{\bf Acknowledgment}
\begin{document}

\title{Judgment Aggregation in Multi-Agent Argumentation}
\author[1]{Edmond Awad}
\author[2]{Richard Booth}
\author[3]{Fernando Tohm\'{e}}
\author[1,4,5,$\dagger$]{Iyad Rahwan}
\affil[1]{Masdar Institute of Science \& Technology, UAE}
\affil[2]{Mahasarakham University, Thailand}
\affil[3]{Universidad Nacional del Sur, Argentina}
\affil[4]{University of Edinburgh, UK}
\affil[5]{MIT, USA}

\affil[$\dagger$]{Correspondence should be addressed to
\texttt{irahwan@acm.org}}
\date{}
\maketitle


\begin{abstract}
Given a set of conflicting arguments, there can exist multiple plausible opinions about which arguments should be accepted, rejected, or deemed undecided. We study the problem of how multiple such judgments can be aggregated. We define the problem by adapting various classical social-choice-theoretic properties for the argumentation domain. We show that while argument-wise plurality voting satisfies many properties, it fails to guarantee the \emph{collective rationality} of the outcome. We then present more general results, proving multiple impossibility results on the existence of \emph{any} good aggregation operator. After characterising the sufficient and necessary conditions for satisfying collective rationality, we study whether restricting the domain of argument-wise plurality voting to classical semantics allows us to escape the impossibility result. We close by mentioning a couple of graph-theoretical restrictions under which the argument-wise plurality rule does produce collectively rational outcomes. In addition to identifying fundamental barriers to collective argument evaluation, our results contribute to research at the intersection of the argumentation and computational social choice fields.

\end{abstract}
\clearpage

\section{Introduction}
\label{section:intro}

Argumentation has recently become one of the key approaches to automated reasoning and rational interaction in Artificial Intelligence \cite{benchcapon:dunne:2007,rahwan:simari:2009}. A key milestone in the development of argumentation in AI has been Dung's landmark framework \cite{dung:1995}, known as abstract argumentation framework (AAF). Arguments are viewed as abstract entities (a set $\calA$), with a binary \emph{defeat} relation (denoted $\rightharpoonup$) over them. The defeat relation captures the fact that one argument somehow attacks or undermines another. This view of argumentation enables high-level analysis while abstracting away from the internal structure
of individual arguments. In Dung's approach, given a set of arguments and a defeat relation, a rule specifies which arguments should
be accepted. 

Often, there are multiple reasonable ways in which an agent may evaluate a given argument structure (\eg accepting only conflict-free, self-defending sets of arguments). Each possible evaluation corresponds to a so-called \emph{extension} \cite{dung:1995} or \emph{labelling} \cite{caminada:2006,caminada:gabbay:2009}. Different argumentation semantics yield different restrictions on the possible extensions. Most previous research has focused on evaluating and comparing different semantics based on the (objective) logical properties of their extensions \cite{baroni:giacomin:2007}.

One of the essential properties, which is common, is the condition of \emph{admissibility}: that accepted arguments must not attack one another, and must defend themselves against counter-arguments, by attacking them back. A stronger notion is called \emph{completeness}, and is captured, in terms of labelling, in the following two conditions:
\begin{enumerate}
	\item An argument is labelled \emph{accepted} (or $\myin$) if and only if all its defeaters are rejected (or $\myout$).
	\item An argument is labelled \emph{rejected} (or $\myout$) if and only if at least one of its defeaters is accepted (or $\myin$).
\end{enumerate}
Otherwise, an argument may be labelled $\myundec$. Thus, evaluating a set of arguments amounts to labelling each argument using a labelling function $L ~:~ \calA \rightarrow \{\myin, \myout, \myundec\}$ to capture these three possible labels. Any labelling that satisfies the above conditions is also called a \emph{legal labelling}. We will often use \emph{legal labelling} and \emph{complete labelling} interchangeably.

The above conditions attempt to evaluate arguments from a single point of view. Indeed, most research on formal models of argumentation discounts the fact that argumentation takes place among self-interested agents, who may have conflicting opinions and preferences over which arguments end up being accepted,
rejected, or undecided. Consider the following simple example.
\begin{example}[A Murder Case]\label{example:murder}
A murder case is under investigation. To start with, there is an argument that the suspect should be presumed innocent ($a_3$). However, there is evidence that he may have been at the crime scene at the time ($a_2$), which would counter the initial presumption of innocence. There is also, however, evidence that the suspect was attending a party that day ($a_1$). Clearly, $a_1$ and $a_2$ are mutually defeating arguments since the suspect can only be in one place at any given time. Hence, we have a set of arguments $\{a_1, a_2, a_3 \}$ and a defeat relation $\rightharpoonup = \{(a_1, a_2), (a_2, a_1), (a_2, a_3) \}$. There are three possible labellings that satisfy the above conditions:
    \begin{itemize}
      \item $L(a_1) = \myin$, $L(a_2) = \myout$, $L(a_3) = \myin$.
      \item $L'(a_1) = \myout$, $L'(a_2) = \myin$, $L'(a_3) = \myout$.
      \item $L''(a_1) = \myundec$, $L''(a_2) = \myundec$, $L''(a_3) = \myundec$.
    \end{itemize}
The graph and possible labellings are depicted in Figure \ref{fig:murder}.
\end{example}

\begin{figure}[ht]
    \centering
  \includegraphics[scale=1]{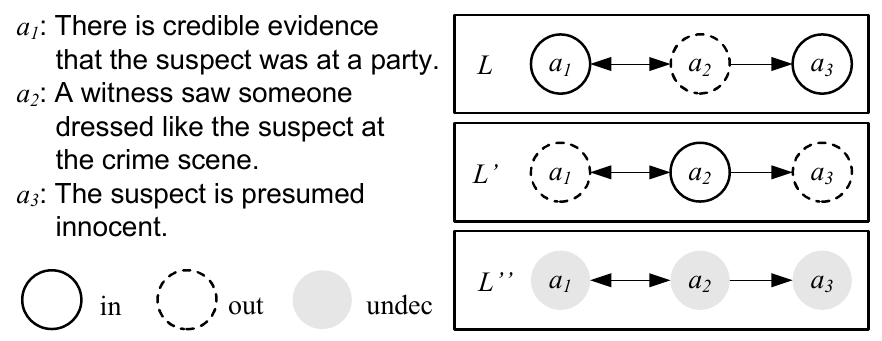}
  \caption{Argument graph with three possible labellings}\label{fig:murder}
\end{figure}

Example \ref{example:murder} highlights a situation in which multiple points of view can be taken, depending on whether one decides to accept the argument that the suspect was at the party or the crime scene. The question we explore in this paper can be highlighted through the following example, extending Example \ref{example:murder}.

\begin{example}[Three Detectives]\label{example:murder_aggregation}
A team of three detectives, named $1$, $2$, and $3$, have been assigned to the murder case described in Example \ref{example:murder}. Each detective's judgment can only correspond to a legal labelling (otherwise, her judgment can be discarded). Suppose that each detective's judgment is such that $L_1 = L$, $L_2 = L'$ and $L_3 = L'$. That is, detectives $2$ and $3$ agree but differ with detective $1$. These labellings are depicted in the labelled graph of Figure \ref{fig:murder_aggregation}. The detectives must decide which (aggregated) argument labelling best reflects their collective judgment.
\end{example}

\begin{figure}[ht]
  \centering
  \includegraphics[scale=1.0]{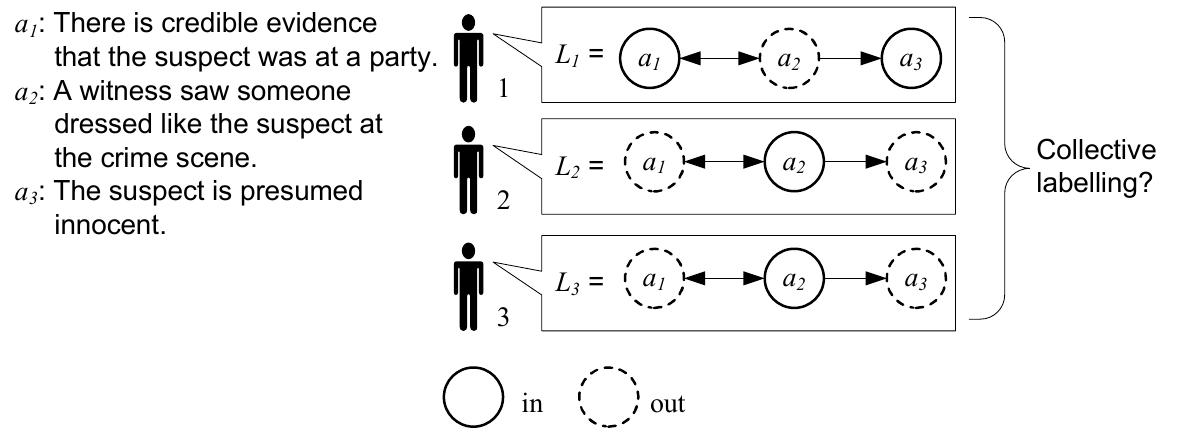}
  \caption{Three detectives with different judgments}\label{fig:murder_aggregation}
\end{figure}

Example \ref{example:murder_aggregation} highlights an aggregation problem, similar to the problem of preference aggregation \cite{arrow:etal:2002,gartner:2006,vincke1982aggregation} and the problem of judgment aggregation on propositional formulae \cite{list:puppe:2009,list2010theory,list2010introduction,grossi2014judgment}. It is perhaps obvious in this particular example that $a_3$ must be rejected (and thus the defendant be considered guilty), since most detectives seem to think so. For the same reason, $a_1$ must be rejected and $a_2$ must be accepted. Thus, labelling $L'$ (see Example \ref{example:murder}) wins by majority. As we shall see in our analysis below, things are not that simple, and counter-intuitive situations may arise. We summarise the main question asked in the paper as follows.

\begin{quote}
\emph{Given a set of agents, each with a specific subjective evaluation (\ie labelling) of a given set of conflicting arguments, how can agents reach a collective decision on how to evaluate those arguments?}
\end{quote}

While Arrow's Impossibility Theorem can be expected to ensue for this problem~\cite{arrow:1951},\footnote{Arrow's Theorem claims that four quite natural constraints, that capture abstractly the properties of a democratic aggregation process, cannot be simultaneously satisfied.} there exist many differences between labellings and preference relations (for which Arrow's result apply), stemming from their corresponding order-theoretic characterisations. In other words, aggregating preferences assumes that agents submit a full order of preferences over candidates, while in labelling aggregation, agents submit their top labelling for a set of logically connected arguments. 

The problem of labelling aggregation is more comparable to the judgment aggregation problem \cite{list:puppe:2009,list2010theory,list2010introduction,grossi2014judgment}, by considering arguments as propositions which are logically connected by the conditions of legal labelling. However, one important difference is that in judgment aggregation, each proposition can have two values: True or False. In labelling aggregation, on the other hand, each argument can have three values: $\myin$, $\myout$, or $\myundec$. This makes labelling aggregation be more comparable to non-binary evaluations \cite{dokow2010aggregation,dokow2010}. Considering the general framework in \cite{dokow2010}, our settings can be considered as focusing on special classes of feasible evaluations, which are the conditions imposed by the legal labelling (or other semantics). Additionally, the possible evaluations of each issue (argument, in our case) are to accept (labels as $\myin$), reject (labels as $\myout$), or be undecided (labels as $\myundec$).


In this paper, we conduct an extensive social-choice-theoretic analysis of argument evaluation semantics by means of labellings. We assume that individuals are presented with a shared argumentation framework (AF) and need to make a decision about how to evaluate this AF. Individuals are assumed to have different, but reasonable, evaluations. There can be many scenarios in which such settings are present. For example, consider a jury members that are all provided with the same information, each of them has a different opinion about these information and yet they all need to come up with a collective decision. Another example is a company board committee who need to make an informed decision. They can be all presented with the same information about the current economic status and the possible strategies, each one of them has his/her own opinion about what should be done, yet they all need to reach a collective decision. 

The paper makes three distinct contributions to the state-of-the-art in the computational modelling of argumentation. Firstly, the paper introduces the study of aggregating different individual judgments on how a given set of arguments is to be evaluated.\footnote{In fact, this idea was first introduced in \cite{rahwan:tohme:2010} for which this paper is a substantially extended and revised version. Section \ref{section:impossibility} which introduces the impossibility of good aggregation operator is significantly enhanced by adding three impossibility results. Sections \ref{section:semantics} and \ref{section:graph} are completely new. Section \ref{section:conclusion} contains more elaborate discussion of related and future work. Finally, further explanation, motivation, discussion and background is added to the other sections to improve clarity and presentation of the paper.} This requires adapting classical social-choice properties to the argumentation domain, and sometimes demands special treatment (e.g. different versions of some properties). 

The second contribution of this paper is proving the impossibility of the existence of \emph{any} aggregation operator that satisfies some minimal properties. In doing so, we show impossibility results that concern dealing with ties and producing a collectively rational evaluation of arguments. These results establish the limits of aggregation in the context of argumentation, and come in accordance with the impossibility results in the topics of aggregation such as preference aggregation \cite{arrow:1951,sen1970impossibility,muller1977equivalence,gibbard:1973,satterthwaite:1975} and judgment aggregation \cite{list:pettit:2002}. 
Hence, as is the case with other aggregation domains, the aggregation paradox in argument evaluation is an example of a more fundamental barrier. These results are important because they give conclusive answers and focus research in more constructive directions (\eg weakening the desired properties in order to avoid the paradox). Aiming to investigate possible relaxations in order to circumvent the impossibility in the context of argumentation, we broke down the \emph{Collective Rationality} postulate into sub-postulates. This helps in taking a deeper look at the distinct parts of the postulate. As a consequence, satisfying any of these parts can be used to weaken the collective rationality. 

The third contribution of this paper is an extensive analysis of an aggregation rule, namely \emph{argument-wise plurality rule}. We analyse the properties of the argument-wise plurality rule in general, and investigate whether the restriction of the domain of votes to a particular classical semantics would ensure the fulfillment of these conditions. This highlights a novel use of classical semantics, which are originally used to resolve issues in single-agent nonmonotonic reasoning. Finally, we provide graph-theoretical restrictions on argumentation frameworks under which the argument-wise plurality rule would be guaranteed to produce collectively rational outcomes.

The paper is organised as follows. In section \ref{section:background}, we start by giving a brief background on abstract argumentation systems. Sections \ref{section:aggregation}, \ref{section:postulates}, \ref{section:impossibility} and \ref{section:avoiding} focus on the problem of aggregating sets of judgments over argument evaluation. Sections \ref{section:awpr}, \ref{section:semantics}, and \ref{section:graph} focus on introducing and analysing the argument-wise plurality rule. We conclude the paper and discuss some related work in Section \ref{section:conclusion}.

\section{Background}\label{section:background}

In this section, we briefly outline key elements of abstract
argumentation frameworks. We begin with Dung's abstract
characterisation of an argumentation system \cite{dung:1995}. We restrict ourselves to finite sets of arguments. 
%
%
\begin{definition}[Argumentation framework]
    An \emph{argumentation framework} is a pair $\AF = \langle
    \calA, \rightharpoonup \rangle$ where $\calA$ is a finite set of
    arguments and $\rightharpoonup \subseteq \calA \times \calA$
    is a defeat relation. We say that an argument $a$
    \emph{defeats} an argument $b$ if $(a, b) \in
    \rightharpoonup$ (sometimes written $a ~\rightharpoonup~
    b$).
\end{definition}
For an argument $a\in \calA$, we use $a^-$ to denote the set of arguments that defeat $a$ i.e. $a^-=\{b\in\calA|b\rightharpoonup a\}$.
\begin{figure}[htbp]
    \centering
       \includegraphics[scale=1]{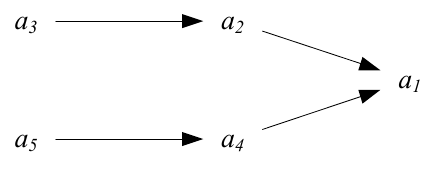}
    \caption{A simple argument graph}
    \label{fig:argument_graph1}
\end{figure}

An argumentation framework can be represented as a directed graph in
which vertices are arguments and directed arcs characterise defeat
among arguments. An example argument graph is shown in
Figure~\ref{fig:argument_graph1}. Argument $a_1$ has two
defeaters (\ie counter-arguments) $a_2$ and $a_4$, which
are themselves defeated by arguments $a_3$ and $a_5$
respectively.

There are two approaches to define semantics that assess the acceptability of arguments. One of them is extension-based semantics by Dung \cite{dung:1995}, which produces a set of arguments that are accepted together. Another equivalent labelling-based  semantics is proposed by Caminada \cite{caminada:2006,caminada:gabbay:2009}, which gives a labelling for each argument. With argument labellings, we can accept arguments (by labelling them as $\myin$), reject arguments (by labelling them as $\myout$), and abstain from deciding whether to accept or reject (by labelling them as $\myundec$). Caminada \cite{caminada:2006,caminada:gabbay:2009} established a correspondence between
properties of labellings and the different extensions. In this paper, we employ the labelling approach.

\begin{definition}[Argument Labelling]\label{definition:labelling}
Let $\AF = \langle \calA, \rightharpoonup \rangle$ be an argumentation
framework. An \emph{argument labelling} is a total function $L :
\calA \rightarrow \{\myin, \myout,$ $\myundec \}$.
\end{definition}

We write $\myin(L)$ (resp. $\myout(L)$, $\myundec(L)$) for the set of arguments that are labelled $\myin$ (resp. $\myout$, $\myundec$) by $L$. A labelling $L$ can be represented as $L=(\myin(L)$,$\myout(L)$,$\myundec(L))$.

However, labellings should follow some given conditions. A minimal reasonable condition is the \emph{conflict-freeness}.
\begin{definition}[Conflict-freeness]
A labelling $L$ satisfies conflict-freeness iff $\forall a,b \in \myin(L)$, ${\neg (a\rightharpoonup b)}$.
\end{definition}

One of the essential semantics, which satisfies conflict-freeness is the \emph{complete semantics}. We already informally defined \emph{complete} labellings via two conditions in the introduction. We find it convenient to equivalently formulate it as three conditions as follows.

\begin{definition}[Complete labelling]\label{definition:CompLabelling}
Let $\AF = \langle \calA, \rightharpoonup \rangle$ be an argumentation
framework. A \emph{complete labelling} is a total function $L :
\calA \rightarrow \{\myin, \myout,$ $\myundec \}$ such that:
\begin{itemize}
    \item $\forall a \in \calA ~:~ \text{if } L(a) = \myin
    \text{ then } \forall b \in \calA ~:~ (b
    \rightharpoonup a \Rightarrow L(b) = \myout)$; 
    
    \item $\forall a \in \calA ~:~ \text{if } L(a) = \myout
    \text{ then } \exists b \in \calA ~\text{s.t.}~ (b
    \rightharpoonup a \wedge L(b) = \myin)$; and
    
    \item $\forall a \in \calA ~:~ \text{if } L(a) = \myundec
    \text{ then }$ 
    \begin{itemize}
    \item $\exists b \in \calA ~:~ (b
    \rightharpoonup a \wedge L(b) = \myundec); ~\text{and}~$
		\item $\not \exists b \in \calA ~:~ (b
    \rightharpoonup a \wedge L(b) = \myin)$ 
    
		\end{itemize}
\end{itemize}
We will use $\comp{\AF}$ to denote the set of all \emph{complete} labellings for $\AF$.
\end{definition}

As an example, consider the following.
\begin{example}
Consider the graph in Figure \ref{fig:argument_graph2}. Here, we have three complete labellings: $L^G=(\{a_3\},\{\},\{a_1,a_2\})$, $L_1=(\{a_1,a_3\},\{a_2\},\{\})$, and $L_2=(\{a_2,a_3\},\{a_1\},\{\})$.
\end{example}

\begin{figure}[htbp]
  \centering
  \includegraphics[scale=1]{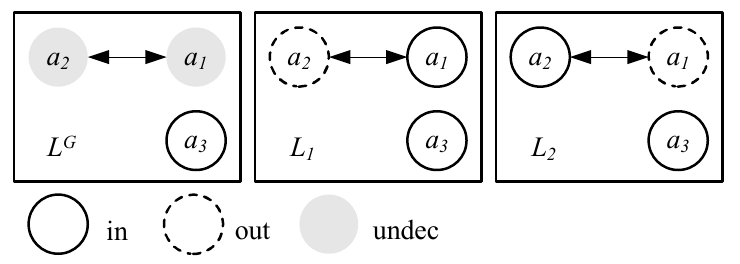}
  \caption{Graph with three complete labellings.}
  \label{fig:argument_graph2}
\end{figure}

In addition to the \emph{complete} labelling, there are other semantics which assume further conditions. 

\begin{definition}[Other Labelling Semantics]
Let $\AF = \langle \calA, \rightharpoonup \rangle$ be an argumentation framework. Let $L: \calA \rightarrow \{\myin,\myout,\myundec\}$ be a complete labelling.
\begin{itemize}
\item $L$ is a grounded labelling if and only if $\myin(L)$ is minimal, or equivalently $\myout(L)$ is minimal, or equivalently $\myundec(L)$ is maximal (w.r.t set inclusion) among all \emph{complete} labellings.
\item $L$ is a preferred labelling if and only if $\myin(L)$ is maximal, or equivalently $\myout(L)$ is maximal (w.r.t set inclusion) among all \emph{complete} labellings.
\item $L$ is a semi-stable labelling if and only if $\myundec(L)$ is minimal (w.r.t set inclusion) among all \emph{complete} labellings.
\item $L$ is a stable labelling if and only if $\myundec(L) = \emptyset$. 
\end{itemize}
\end{definition}

Note that the grounded labelling is always unique, and stable labellings might not exist. Consider the following example.

\begin{example}
Consider the graph in Figure \ref{fig:argument_graph2}. Here, we have the \emph{grounded} labelling is $L^G=(\{a_3\},\{\},\{a_1,a_2\})$. We have only two \emph{preferred} labellings: $L_1=(\{a_1,a_3\},\{a_2\},\{\})$, and $L_2=(\{a_2,a_3\},\{a_1\},\{\})$. These are also the only \emph{stable} and \emph{semi-stable} labellings for this framework.
\end{example}

Clearly, for any $\AF$, $\stab{\AF} \subseteq \semi{\AF} \subseteq \pref{\AF} \subseteq \comp{\AF}$, and $\grnd{\AF} \subseteq \comp{\AF}$, where $\stab{\AF}$, $\semi{\AF}$, $\pref{\AF}$, and $\grnd{\AF}$ refer to the set of \emph{stable}, \emph{semi-stable}, \emph{preferred}, and \emph{grounded} labellings for $\AF$. We refer to the previous semantics as \emph{classical semantics}. There exist other semantics which we do not consider in this work.


\section{Aggregation of Argument Labellings}\label{section:aggregation}

To date, most analyses inspired by Dung's framework have focused on analysing and comparing  the properties of various types of extensions/labellings (\ie semantics) \cite{baroni:giacomin:2007}. 
The question is, therefore, whether a particular type of labelling is appropriate for a particular type of reasoning task in the presence of conflicting arguments.

In contrast with most existing work on Dung frameworks, our concern here is with multi-agent systems. Since each labelling captures a particular rational point of view, we ask the following question: \emph{Given an argumentation framework and a set of agents, each with a legitimate subjective evaluation of the given arguments, how can the agents reach a collective compromise on how to evaluate those arguments?}

Thus, the problem we face is that of \emph{judgment aggregation} \cite{list:pettit:2002} in the context of argumentation frameworks. 
This problem can be formulated as a set of individuals that collectively decide how an argumentation framework $\AF = \langle \calA, \rightharpoonup \rangle$ must be labelled. 
\begin{definition}[Labelling aggregation problem]
Let $\Ag=\{1,\ldots,n\}$ be a finite non-empty set of agents, and $\AF = \langle \calA, \rightharpoonup \rangle$ be an argumentation framework. A labelling aggregation problem is a pair $\calLAP= \langle \Ag, \AF \rangle$.
\end{definition}
Each individual $i \in \Ag$ has a labelling $L_i$ which expresses the evaluation of $\AF$ by this individual. A labelling profile is an $|\Ag|$-tuple of labellings.
\begin{definition}[Labelling profile]
Let $\calLAP= \langle \Ag, \AF \rangle$ be a labelling aggregation problem. We use $\calL=(L_1,\ldots,L_n) \in \mathbf{L}(\AF)^{|\Ag|}$ to denote a labelling profile, where $\mathbf{L}(\AF)$ is the class of labellings of $\AF$. Additionally, we use $\calL(a)$ to denote the labelling profile (i.e. an $|\Ag|$-tuple) of an argument $a \in \calA$ i.e. $\calL(a)=(L_1(a),\ldots,L_n(a))$.
\end{definition}
The aggregation of individuals' labellings can be defined as a partial function.\footnote{We state that the function is partial to allow for cases in which collective judgment may be undefined (e.g. when there is a tie in voting).}
\begin{definition}[Aggregation function]
Let $\calLAP= \langle \Ag, \AF \rangle$ be a labelling aggregation problem. An aggregation function for $\calLAP$ is a function $F: \mathbf{L}(\AF)^{n} \rightarrow \mathbf{L}(\AF)$. 
\end{definition}
For each $a \in \calA$, $[F(\calL)](a)$ denotes the collective label assigned to $a$, if $F$ is defined for $\calL=(L_{1}, \ldots,L_{n})$.

\section{Desirable Properties of Aggregation Operators}\label{section:postulates}

Aggregation involves comparing and assessing different points of view. There are, of course, many ways of doing this, as extensively
discussed in the literature of Social Choice Theory \cite{gartner:2006}. In this literature, a consensus on some normative ideals has been reached, identifying what a `fair' way of adding up votes should be. So for instance, if everybody agrees, the outcome must reflect that
agreement; no single agent can impose her view on the aggregate; the aggregation should be performed in the same way in each possible
case, $\etc$ These informal requirements can be formally stated as properties that $F$ should satisfy \cite{list:pettit:2002, dietrich:2007}. In all of the following postulates, it is assumed that a fixed labelling aggregation problem $\calLAP= \langle \Ag, \AF \rangle$ is given. The postulates can be grouped as follows:\footnote{This style of presentation of postulates was inspired by \cite{GrandiPhD12} which is on binary aggregation.}\\

\noindent{\em Group 1: Domain and co-domain postulates}\\

In judgment aggregation, two postulates that are commonly assumed are those of {\em Universal Domain} and {\em Collective Rationality}. The former requires that any profile of labellings chosen from a pre-specified set of {\em feasible} labellings can be used as input to $F$ and $F$ will return an answer. The question is: what do we take to be the set of feasible labellings in our setting? This depends on which semantics we assume is being used. Theoretically we can have a different version of \emph{Universal Domain} for each semantics. However since \emph{complete} semantics represent reasonable and self-defending points of views, it represents the best counterpart for the logical consistency in judgment aggregation:

\begin{quote}
{\bf Universal Domain} $F$ can take as input all profiles $\calL = (L_1,\ldots,L_n)$ such that $\calL\in \comp{\AF}^n$
\end{quote}
However, in Subsection \ref{subsection.othersem} we will use other semantics as a domain for $\calL$.

Similarly we could have a different version of \emph{Collective Rationality} - one for each semantics - stating that the output of the aggregation should also be feasible. Again, since we focus on complete semantics, we focus on the following version:

\begin{quote}
{\bf Collective Rationality} 
For all profiles $\calL$ such that $F(\calL)$ is defined, $F(\calL) \in \comp{\AF}$.
\end{quote}
Later, in Section \ref{section:avoiding}, we will break this postulate down into further constituents.\\ 

\noindent{\em Group 2: Fundamental postulates}\\

Next we come to the standard property that forms the cornerstone of the usual impossibility results in judgment aggregation. It says the collective label of an argument depends only on the votes on that argument, independent of the other arguments.

\begin{quote}
{\bf Independence} For any two profiles $\calL =  (L_1,\ldots,L_n)$, $\calL' = (L'_1, \ldots, L'_n)$ such that $F(\calL)$ and $F(\calL')$ are defined, and for all $a \in \calA$, if $L_i(a) = L'_i(a)$ for all $i \in \Ag$, then $[F(\calL)](a) = [F(\calL')](a)$. 
\end{quote}

\noindent
The effect of \emph{Independence} is that aggregation is done ``argument-by-argument". To be slightly more precise, each argument $a \in \calA$ essentially has its own aggregation operator $I_a$ associated to it, that takes an $n$-tuple of labels $\mathbf{x} = (l_1, \ldots, l_n)$ as input (representing the ``vote" of each agent on the label of $a$) and returns another label $I_a(\mathbf{x})$ as output (the ``collective label") of $a$. Then $[F(\calL)](a) = I_a((L_1(a), \ldots, L_n(a)))$. Note that the necessity of \emph{Independence} is questionable in our settings because of the dependencies between arguments that come already encoded in the form of the attack relation. Nevertheless, it is usually investigated in the judgment aggregation and preference aggregation literature because of its role in analysing strategy-proofness. Though the relation between \emph{Independence} and strategy-proofness is not established yet in our settings, our task in this paper is to stick close to the methodology in judgment aggregation, and there it is often assumed.

Next, we have \emph{Anonymity}, which says the identity of which agent submits which labelling is irrelevant.
\begin{quote}
{\bf Anonymity}
For any profile $\calL=(L_1,\ldots,L_n)$, if $\calL' = (L_{\rho(1)}, \ldots, L_{\rho(n)})$ for some permutation $\rho$ on $\Ag$, and $F(\calL)$ and $F(\calL')$ are both defined, then $F(\calL) = F(\calL')$.
\end{quote}

\noindent
If we add \emph{Anonymity} to \emph{Independence}, then it means the outputs of the functions $I_a$ described above depend only on the \emph{number} of votes that each label gets in $\mathbf{x}$. Essentially it means $I_a$ outputs a collective label just taking as input the triple $(\#\myin,\#\myout,\#\myundec)$ of numbers denoting, respectively, the number of votes for $\myin$, $\myout$ and $\myundec$ in $\mathbf{x}$. 

\begin{proposition}
Let $F$ be an aggregation operator. Then $F$ satisfies both \emph{Independence} and \emph{Anonymity} iff for each $a \in \calA$ there exists a function $I_a: \mathbb{N}^3 \rightarrow \{\myin, \myout, \myundec \}$ such that, for all $\calL$ we have $[F(\calL)](a) = I_a(\#\myin,\#\myout,$ $\#\myundec)$. 
\end{proposition}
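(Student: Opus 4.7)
The plan is to reduce the claim to two fairly routine moves: use \emph{Independence} to extract, for every argument, an aggregator that depends only on the tuple of labels cast for that argument; then use \emph{Anonymity} to show this aggregator is symmetric in its arguments, hence depends only on the count triple. The converse is a direct verification.

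For the forward direction, I would first fix $a\in\calA$ and apply \emph{Independence}. This yields a partial function $I'_a$ from $\{\myin,\myout,\myundec\}^n$ to $\{\myin,\myout,\myundec\}$, defined on every label-tuple $(L_1(a),\ldots,L_n(a))$ that arises from some profile $\calL$ on which $F$ is defined, satisfying $[F(\calL)](a)=I'_a(L_1(a),\ldots,L_n(a))$; \emph{Independence} guarantees it is well-defined. Next, I would invoke \emph{Anonymity}: for any permutation $\rho$, if $\calL$ is in the domain then so is $\calL_\rho=(L_{\rho(1)},\ldots,L_{\rho(n)})$ (the domain $\comp{\AF}^n$ is closed under permutations), and $F(\calL)=F(\calL_\rho)$, so $I'_a(L_1(a),\ldots,L_n(a))=I'_a(L_{\rho(1)}(a),\ldots,L_{\rho(n)}(a))$. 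Hence on every label-tuple that occurs, $I'_a$ is invariant under permutations, so its value depends only on the multiset of labels, equivalently on the triple $(\#\myin,\#\myout,\#\myundec)$.

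I would then define $I_a:\mathbb{N}^3\to\{\myin,\myout,\myundec\}$ by setting $I_a(n_1,n_2,n_3)=I'_a(\mathbf{x})$ for any tuple $\mathbf{x}$ with exactly $n_1$ $\myin$'s, $n_2$ $\myout$'s, and $n_3$ $\myundec$'s whenever such a tuple arises from some profile in the domain, and defining $I_a$ arbitrarily elsewhere (this freedom is harmless, since those triples never appear in any $\calL(a)$). By construction, $[F(\calL)](a)=I_a(\#\myin,\#\myout,\#\myundec)$ for every profile $\calL$ on which $F$ is defined.

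For the converse, assume each $I_a$ exists. For \emph{Independence}, if $L_i(a)=L'_i(a)$ for all $i$ then the count triples at $a$ under $\calL$ and $\calL'$ coincide, so $[F(\calL)](a)=I_a(\#\myin,\#\myout,\#\myundec)=[F(\calL')](a)$. For \emph{Anonymity}, any permutation of a profile leaves the count triple of every argument unchanged, so $F(\calL)=F(\calL_\rho)$. The only mild subtlety worth flagging is the partiality of $F$ and the fact that the domain of $I'_a$ is a priori only the set of label-tuples arising from defined profiles, which is why $I_a$ is extended arbitrarily off that set; but since the equality $[F(\calL)](a)=I_a(\ldots)$ is only asserted for profiles on which $F$ is defined, this causes no issue.
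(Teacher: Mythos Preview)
Your proposal is correct and follows essentially the same approach as the paper's proof outline: use \emph{Independence} to reduce to a function of the label tuple at each argument, then use \emph{Anonymity} to show this function depends only on the vote distribution $(\#\myin,\#\myout,\#\myundec)$, with the converse being a direct check. Your write-up is in fact more careful than the paper's outline about the partiality of $F$ and the need to extend $I_a$ arbitrarily off the realised triples; the only minor quibble is that you assert the domain of $F$ is closed under permutations by identifying it with $\comp{\AF}^n$, whereas the paper's \emph{Anonymity} postulate is conditional on both $F(\calL)$ and $F(\calL_\rho)$ being defined---but this does not affect the argument since the paper's own proof glosses over the same point.
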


\begin{proof}[Outline]
The ``if'' case is straightforward, since permuting the rows does not change the vote distribution and so \emph{Anonymity} will hold. \emph{Independence} is also clear. 

For the ``only if'' case, \emph{Independence} gives us the existence of the function $I_a$ such that $[F(\calL)](a) = I_a(L_1(a), \ldots, L_n(a))$ and then \emph{Anonymity} implies that two vectors that have the same vote distribution will give the same results, so we can set $I_a(\#\myin,\#\myout,\#\myundec) = I_a(L_1(a), \ldots, L_n(a))$ where $(L_1(a), \ldots, L_n(a))$ is any vote which has $(\#\myin,\#\myout,\#\myundec)$ as its distribution.
\end{proof}

A weakening of \emph{Anonymity} is \emph{Non-Dictatorship}:\footnote{Since a violation of the latter would imply a violation of the former.}

\begin{quote}
{\bf Non-Dictatorship}
There is no $i \in \Ag$ such that, for every profile $\calL = (L_1, \ldots, L_n)$ for which $F(\calL)$ is defined, we have $F(\calL) = L_i$.
\end{quote}

\noindent{\em Group 3: \emph{Unanimity} postulates}\\

Next we move to \emph{Unanimity}, and some other postulates related to it.

\begin{quote}
{\bf Unanimity}
If $\calL$ is such that $F(\calL)$ is defined and there exists some $L$ s.t.\ $L_i = L$ for all $i \in \Ag$, then $F(\calL) = L$. 
\end{quote}

\noindent
This postulate is also familiar from judgment aggregation, but the move to 3-valued labellings rather than the 2 usually seen in judgment aggregation opens up the possibility to define other variants of \emph{Unanimity}, one of which is used by Dokow and Holzman \cite{dokow2010}, called \emph{Supportiveness}:

\begin{quote}
{\bf Supportiveness}
For any profile $\calL$ such that $F(\calL)$ is defined, and for all $a \in \calA$, there exists $i \in \Ag$ such that $[F(\calL)](a) = L_i(a)$.
\end{quote}

\noindent
\emph{Supportiveness} says that, for each argument $a$ and label $l$, the collective judgment cannot be set to $l$ without at least one agent voting for that $l$. Clearly \emph{Supportiveness} implies \emph{Unanimity}. 

It might seem natural to have the collective label of an argument as $\myundec$ even when nobody votes for it, if we interpret $\myundec$ as a halfway label between $\myin$ and $\myout$. Then if half the agents say $\myin$ and the other half says $\myout$ then $\myundec$ might be a reasonable compromise. Given this, a weaker version of \emph{Supportiveness} that only applies to $\myin$ and $\myout$ can be defined. We call it \emph{$\myin/\myout$-Supportiveness}.\\

\begin{quote}
{\bf $\myin/\myout$-Supportiveness}
For any profile $\calL$ such that $F(\calL)$ is defined, and for all $a \in \calA$, if $[F(\calL)](a) \neq \myundec$ then there exists some agent $i$ such that $[F(\calL)](a) = L_i(a)$.\\
\end{quote}

\noindent{\em Group 4: \emph{Systematicity} postulates}\\

Now we come to the \emph{Systematicity} postulates which deal with neutrality issues across arguments and labels. We can list two variants, both of which imply \emph{Independence}. We start with the stronger version:

\begin{quote}
{\bf Strong Systematicity}
For any two profiles $\calL = (L_1,\ldots,L_n)$ and $\calL' = (L_1', \ldots, L_n')$ such that $F(\calL)$ and $F(\calL')$ are defined, and for all $a,b \in \calA$, and for every permutation $\rho$ on the set of labels $\{ \myin, \myout, \myundec\}$, if $\forall i \in \Ag:$ $L_i(a) = \rho(L_i'(b))$, then $[F(\calL)](a) = \rho([F(\calL')](b))$.
\end{quote}

To illustrate \emph{Strong Systematicity}, consider the example in Figure \ref{fig:strsys}. We have the following three labellings: $L_1 = (\{a\}, \{b\},\{\})$, $L_2 = (\{b\}, \{a\},\{\})$, $L_3 = (\{\}, \{\},\{a,b\})$.

\begin{figure}[htbp]
	\begin{center}
  \includegraphics[scale=1.2]{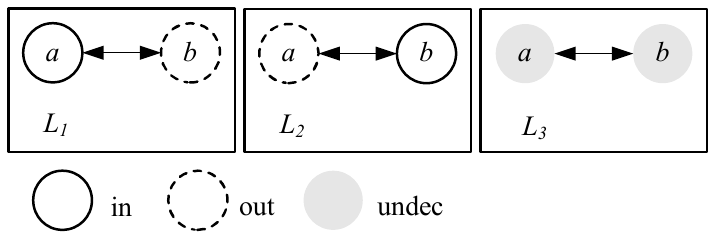}
	\end{center}
  \caption{An example illustrating \emph{Strong Systematicity}.}\label{fig:strsys}
\end{figure}

Consider the profiles $\calL=(L_1,L_1,L_2,L_3)$ and $\calL'=(L_3,L_3,L_2,L_1)$. Then, $\calL(a)=(\myin,\myin,\myout,\myundec)$ and $\calL'(b)=(\myundec,\myundec,\myin,\myout)$. Let $\rho$ be the permutation on labels such that $\rho(\myin)=\myundec$, $\rho(\myout)=\myin$, and $\rho(\myundec)=\myout$. Then, we can see that in this example $\forall i \in \Ag :$ $L'_i(b)=\rho(L_i(a))$. \emph{Strong Systematicity} requires that $[F(\calL')](b) = \rho([F(\calL)](a))$.

\noindent
The postulate forces us to give an even-handed treatment to the labels $\myin$, $\myout$ and $\myundec$ (in addition to treating each argument independently and similarly). This makes sense if we consider $\myin$, $\myout$ and $\myundec$ as three independent labels. However, one might be tempted to consider $\myundec$ as a middle label between $\myin$ and $\myout$. Hence, the equal treatment might not be desirable in this case. One might suggest a version of \emph{Systematicity} that treats $\myin$ and $\myout$ equally. Following, we define this version (which we call $\myin/\myout$-\emph{Systematicity}). \\

\begin{quote}
{\bf $\myin/\myout$-Systematicity}
For any two profiles $\calL = (L_1,\ldots,L_n)$ and $\calL' = (L_1', \ldots, L_n')$ such that $F(\calL)$ and $F(\calL')$ are defined, and for all $a,b \in \calA$, and for every $\myundec$-preserving permutation $\rho$ on the set of labels $\{ \myin, \myout, \myundec\}$ ($\ie \rho(\myundec)=\myundec$), if $\forall i \in \Ag:$ $L_i(a) = \rho(L_i'(b))$, then $[F(\calL)](a) = \rho([F(\calL')](b))$.\\
\end{quote}
$\myin/\myout$-\emph{Systematicity} lies in the middle between \emph{Strong Systematicity} and the following version of \emph{Systematicity} which can be obtained by restricting the class of permutations, until we only consider the identity. \\

\begin{quote}
{\bf Weak Systematicity}
For any two profiles $\calL = (L_1,\ldots,L_n)$ and $\calL' = (L_1', \ldots, L_n')$ such that $F(\calL)$ and $F(\calL')$ are defined, and for all $a,b \in \calA$, if $\forall i \in \Ag:$ $L_i(a) = L_i'(b)$, then $[F(\calL)](a) = [F(\calL')](b)$.
\end{quote}

\noindent 
Clearly \emph{Independence} follows from \emph{Weak Systematicity} by just setting $a=b$. If we strengthen \emph{Independence} to \emph{Weak Systematicity} then the functions $I_a$, mentioned earlier, are identical for all arguments.\\

\noindent{\em Group 5: \emph{Monotonicity} postulates}\\

Our final group relates to \emph{Monotonicity}.

\begin{quote}
{\bf Monotonicity}
Let $l_a \in \{\myin, \myout, \myundec\}$ be such that given two profiles $\calL = (L_{1}, \ldots, L_{i},$ $\ldots, L_{i + k}, \ldots, L_{n})$ and $\calL'=(L_{1}, \ldots, L'_{i}, \ldots, L'_{i + k}, \ldots, L_{n})$ (differing only in the labellings of agents $i, i+1,\ldots, i+k$) such that $F(\calL)$ and $F(\calL')$ are defined, where $i \in \{1,\ldots,n\}$ and $k \in \{0,\ldots, n-i\}$, if $L_{j}(a) \neq l_a$ while $L^{'}_{j}(a) = l_a$ for all $j \in \{i,\ldots ,i+k\}$, then $[F(\calL)](a) = l_a$ implies that $[F(\calL')](a) = l_a$.
\end{quote}
\noindent
\emph{Monotonicity} states that if a set of agents switch their label of argument $a$ to the collective label of $a$ then the collective label of $a$ remains the same. Similar to \emph{Supportiveness} and \emph{Systematicity}, a weaker version of \emph{Monotonicity} that only apply to $\myin$ and $\myout$ can be defined. We call it \emph{$\myin/\myout$-Monotonicity}.

\begin{quote}
{\bf $\myin/\myout$-Monotonicity}
Let $l_a \in \{\myin, \myout\}$ be such that given two profiles $\calL = (L_{1}, \ldots, L_{i},$ $\ldots, L_{i + k}, \ldots, L_{n})$ and $\calL'=(L_{1}, \ldots, L'_{i}, \ldots, L'_{i + k},$ $\ldots, L_{n})$ (differing only in the labellings of agents $i, i+1,\ldots, i+k$) such that $F(\calL)$ and $F(\calL')$ are defined, where $i \in \{1,\ldots,n\}$ and $k \in \{0,\ldots, n-i\}$, if $L_{j}(a) \neq l_a$ while $L^{'}_{j}(a) = l_a$ for all $j \in \{i,\ldots ,i+k\}$, then  $[F(\calL)](a) = l_a$ implies that $[F(\calL')](a) = l_a$.\\
\end{quote}


\section{The Argument-Wise Plurality Rule}\label{section:awpr}

An obvious candidate aggregation operator to check out is the {\em plurality}
voting operator $M$. In this section, we analyse a number of key properties of this operator. Intuitively, for each argument, it selects the label that appears most frequently in the individual labellings.

\begin{definition}[Argument-Wise Plurality Rule (AWPR)]
Let $\AF =\langle \calA, \rightharpoonup \rangle$ be an argumentation framework. Given any argument $a \in \calA$ and any profile $\calL =(L_{1}, \ldots, L_{n})$, it holds that $[M(\calL)](a)=l_a \in \{\myin,\myout,\myundec\}$ iff 
$$|\{i: L_{i}(a) = l_a\}| >
    \max_{l'_a \neq l_a}|\{i: L_{i}(a) = l'_a \}|$$
\end{definition}

Note that $M$ is defined for all profiles that cause no ties, i.e. $M(\calL)$ is defined iff there does not exist any argument $a\in\calA$ for which we have at least two labels $l_a$ and $l'_a$ with $l_a \neq l'_a$ and 
\[|\{i: L_{i}(a) = l_a\}| = |\{i: L_{i}(a) = l'_a\}| = \max_l|\{i: L_{i}(a) = l \}|\] 

One can directly notice that AWPR violates \emph{Universal Domain}, because it is not defined for all profiles in $\comp{\AF}$.


\begin{example}[Three Detectives (cont.)]
Continuing on Example \ref{example:murder_aggregation}, applying the argument-wise plurality rule, we have $[M((L_1, L_2, L_3))](a_1)=$ $\myout$, $[M((L_1, L_2, L_3))](a_2) =$ $\myin$, and $[M((L_1, L_2, L_3))](a_3) =$ $\myout$.
\end{example}

\subsection{Properties of Argument-Wise Plurality Rule}

We now analyse whether AWPR satisfies the properties listed above.

\begin{proposition}\label{theorem:1to7}
The argument-wise plurality rule operator $M$ satisfies \emph{Supportiveness}, \emph{Anonymity}, \emph{Strong Systematicity}, and \emph{Monotonicity}.
\begin{proof}
In this proof, the considered profiles are restricted to those for which $[M(\calL)]$ is defined.

\begin{itemize}

\item {\em Supportiveness}: consider any profile $\calL = (L_1,\ldots,L_n)$. Suppose, towards a contradiction, that for some argument $a$, there exists no agent $i$ such that $L_i(a) = l_a$ where $l_a=[M(\calL)](a)$. Then $|\{i: L_{i}(a) = l_a\}|=0$. But, $|\{i: L_{i}(a) = l_a\}|>\max_{l'_a \neq l_a}|\{i: L_{i}(a) = l'_a\}|>0$ (the last inequality holds since $\Ag$ is non-empty). Contradiction. 


\item {\em Anonymity}: consider any profile $\calL = (L_1,\ldots,L_n)$. $[M(\calL)](a) = l_a$ if and only if $|\{i:
L_{i}(a)=l_a\}| > \max_{l'_a \neq l_a}|\{i:
L_{i}(a) = l'_a\}|$ if and only if $|\{\rho(i):
L_{\rho(i)}(a)=l_a\}| > \max_{l'_a \neq l_a}|\{\rho(i):
L_{\rho(i)}(a)=l'_a\}|$, which is equivalent to\\
$[M((L_{\rho(1)}, \ldots, L_{\rho(i)}, \ldots,
L_{\rho(n)}))](a) = l_a$.

\item {\em Strong Systematicity}: consider, for any two profiles $\calL = (L_1,\ldots,L_n)$ and $\calL' = (L'_1,\ldots,L'_n)$, and for any $a, b \in \calA$, the permutation $\rho: \{ \myin, \myout, \myundec\}$ $\rightarrow \{ \myin, \myout, \myundec\}$. Suppose, towards a contradiction, that for any $i$, $L_i(a) = \rho(L_i'(b))$, and $[M(\calL)](a) =l_a$ but $\rho(M(\calL')[b]) \neq \rho(l_a)$. But then,$|\{i: L_{i}(a)=l_a\}| = |\{i: L^{'}_{i}(b)=\rho(l_a)\}|$ while for any $l'_a \neq l_a$, $|\{i: L_{i}(a) = l'_a\}| = |\{i: L^{'}_{i}(b) = \rho(l'_a)\}|$. So, if $|\{i: L_{i}(a)=l_a\}| > \max_{l'_a \neq l_a}|\{i: L_{i}(a) = l'_a\}|$ then, we have $|\{i: L^{'}_{i}(b)=\rho(l_a)\}| > \max_{l'_a \neq l_a}|\{i: L^{'}_{i}(b) = \rho(l'_a)\}|$ as well. Contradiction. 




\item {\em Monotonicity}: Consider the following two profiles $\calL = (L_{1}, \ldots, L_{i}, \ldots, L_{i + k},$ $\ldots, L_{n})$ and $\calL'=(L_{1}, \ldots, L'_{i}, \ldots, L'_{i + k}, \ldots, L_{n})$ (differing only in the labellings of agents $i, i+1,\ldots, i+k$) where $i \in \{1,\ldots,n\}$ and $k \in \{0,\ldots, n-i\}$. Suppose, towards a contradiction, that for $a \in \calA$ and a label $l_a$ we have that $L_{h}(a)\neq l_a$ while $L^{'}_{h}(a)=l_a$ for all $h \in \{i,\ldots ,i+k\}$, and we have that $[M(\calL)](a) = l_a$ while
$[M(\calL')](a) \neq l_a$. But then, $|\{j: L_{j}(a) = l_a\}| > \max_{l'_a \neq l_a}|\{j: L_{j}(a) = l'_a\}|$ in the profile $\calL$ while in the profile $(\hat{L}_{1}, \ldots,\hat{L}_{n})$$\equiv$$\calL'$, 
\noindent we have  $ \{j: \hat{L}_{j}(a) = l_a\}= \{j: L_{j}(a) = l_a\} \cup \{i,\ldots,i+k\}$ and $\{j: \hat{L}_j(a) = l'_a\} \subseteq \{j:L_j(a) =l'_a \}$ for every other labelling $l'_a$. Then $|\{j: \hat{L}_{j}(a) = l_a\}| > \max_{l'_a \neq l_a}|\{j: \hat{L}_{j}(a) = l'_a\}|$. Contradiction.


\end{itemize}
\end{proof}
\end{proposition}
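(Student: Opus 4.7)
The plan is to verify each of the four postulates directly by unpacking the definition of the plurality rule $M$, which selects, for every argument independently, the unique label whose vote count strictly exceeds the counts of the other two labels. Throughout, I restrict attention to profiles on which $M(\calL)$ is defined, since the postulates are conditioned on $F(\calL)$ being defined.

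For \emph{Supportiveness}, the argument is essentially that the plurality winner cannot receive zero votes when there is at least one voter. First I would assume toward a contradiction that $[M(\calL)](a) = l_a$ but no agent casts the label $l_a$ on $a$, i.e.\ $|\{i : L_i(a) = l_a\}| = 0$. Since $|\Ag| \geq 1$, at least one of the other two label counts is strictly positive, contradicting the defining inequality $|\{i: L_{i}(a) = l_a\}| > \max_{l'_a \neq l_a}|\{i: L_{i}(a) = l'_a\}|$. For \emph{Anonymity}, I would observe that for every permutation $\rho$ of $\Ag$ and every label $l$, $|\{i: L_i(a) = l\}| = |\{i : L_{\rho(i)}(a) = l\}|$, because both sides count the same multiset of votes. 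Hence the strict inequality defining the plurality winner is preserved verbatim, so $M(\calL) = M(L_{\rho(1)}, \ldots, L_{\rho(n)})$.

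For \emph{Strong Systematicity}, the key observation is that the hypothesis $L_i(a) = \rho(L'_i(b))$ for every $i$ induces, for every label $l$, the equality $|\{i : L_i(a) = l\}| = |\{i : L'_i(b) = \rho^{-1}(l)\}|$. I would then note that a permutation $\rho$ on labels induces a bijection on the set of alternative labels, so ``$l_a$ strictly beats all other labels in $\calL(a)$'' translates directly into ``$\rho^{-1}(l_a)$ strictly beats all other labels in $\calL'(b)$''. Setting $l_a = [M(\calL)](a)$, this gives $[M(\calL')](b) = \rho^{-1}(l_a)$, equivalently $[M(\calL)](a) = \rho([M(\calL')](b))$, which is what the postulate demands. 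The contrapositive formulation of the argument in the text is essentially this counting equivalence.

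Finally, for \emph{Monotonicity}, suppose $[M(\calL)](a) = l_a$, and form $\calL'$ by switching a subset $S = \{i, \ldots, i+k\}$ of agents who did not vote $l_a$ for $a$ so that they now do. Counting-wise, the number of $l_a$-votes on $a$ strictly increases by $|S|$, while the number of votes for each other label $l'_a$ weakly decreases. Thus the strict inequality $|\{j : L_j(a) = l_a\}| > \max_{l'_a \neq l_a}|\{j : L_j(a) = l'_a\}|$ continues to hold in $\calL'$, yielding $[M(\calL')](a) = l_a$. I do not expect any step here to be subtle; the only minor care is bookkeeping the permutation direction in \emph{Strong Systematicity}, which is the main place where a sign error is possible.
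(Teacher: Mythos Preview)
Your proposal is correct and follows essentially the same approach as the paper: each of the four properties is verified by a direct counting argument from the defining inequality of $M$. The only cosmetic difference is that the paper phrases three of the four cases by contradiction whereas you argue directly; your handling of the permutation direction in \emph{Strong Systematicity} (using $\rho^{-1}$) is in fact slightly cleaner than the paper's, which writes $\rho(l_a)$ where $\rho^{-1}(l_a)$ is intended.
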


\begin{corollary}\label{cor.postu}
The argument-wise plurality rule operator $M$ satisfies \emph{Unanimity}, \emph{Weak Systematicity}, \emph{Independence}, and \emph{Non-Dictatorship}.
\begin{proof}
{\em Weak Systematicity} and {\em Independence} follow from \emph{Strong Systematicity}, {\em Unanimity} follows from \emph{Supportiveness}, and
{\em Non-Dictatorship} follows from \emph{Anonymity}. 
\end{proof}
\end{corollary}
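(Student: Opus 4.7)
The plan is to reduce each of the four claimed properties to a stronger sibling postulate already established for $M$ in Proposition \ref{theorem:1to7}, since $M$ has been shown to satisfy \emph{Supportiveness}, \emph{Anonymity}, \emph{Strong Systematicity}, and \emph{Monotonicity}. Concretely, I would prove (and rely on) four generic implications that hold for any aggregation operator: (i) \emph{Strong Systematicity} $\Rightarrow$ \emph{Weak Systematicity}; (ii) \emph{Weak Systematicity} $\Rightarrow$ \emph{Independence}; (iii) \emph{Supportiveness} $\Rightarrow$ \emph{Unanimity}; and (iv) \emph{Anonymity} $\Rightarrow$ \emph{Non-Dictatorship}. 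Since none of these implications uses the specific definition of $M$, combining them with Proposition \ref{theorem:1to7} yields the corollary.

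The first three implications are essentially by specialisation. For (i), I would take $\rho$ to be the identity permutation on $\{\myin,\myout,\myundec\}$; the premise and conclusion of \emph{Strong Systematicity} then collapse to those of \emph{Weak Systematicity}. For (ii), setting $a=b$ in \emph{Weak Systematicity} gives exactly \emph{Independence}. For (iii), if $\calL=(L,\dots,L)$ is unanimous, then for every argument $a$ the only label occurring in $\calL(a)$ is $L(a)$, so \emph{Supportiveness} forces $[F(\calL)](a)=L(a)$, i.e.\ $F(\calL)=L$.

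The only implication that needs a little care is (iv), and this will be the main obstacle. Assume for contradiction that some agent $i$ is a dictator, so $F(\calL)=L_i$ whenever $F(\calL)$ is defined. I would exhibit a profile $\calL$ with $L_i\neq L_j$ for some $j\neq i$ on which $M$ is defined; for instance, take $n-1$ copies of an arbitrary complete labelling $L$ and one copy of a distinct complete labelling $L'$ (such a pair exists in any nontrivial AF, and a strict plurality is present at every argument, so $M$ is defined). Applying the transposition swapping positions $i$ and $j$ produces a profile $\calL'$ still in the domain of $M$; \emph{Anonymity} forces $F(\calL')=F(\calL)=L_i$, whereas dictatorship demands $F(\calL')=L_j\neq L_i$, a contradiction. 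Hence no such $i$ exists and \emph{Non-Dictatorship} holds. The only subtlety here is ensuring the witness profile and its permutation both lie in the domain of $M$, which is precisely why one uses a near-unanimous profile rather than a perfectly balanced one.
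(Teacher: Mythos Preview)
Your proposal is correct and follows exactly the paper's approach: each of the four properties is derived from a stronger sibling already established in Proposition~\ref{theorem:1to7} (\emph{Strong Systematicity} $\Rightarrow$ \emph{Weak Systematicity} $\Rightarrow$ \emph{Independence}; \emph{Supportiveness} $\Rightarrow$ \emph{Unanimity}; \emph{Anonymity} $\Rightarrow$ \emph{Non-Dictatorship}). Your treatment of implication (iv), where you explicitly construct a near-unanimous witness profile to ensure both it and its permutation lie in the domain of $M$, is more careful than the paper's one-line assertion and correctly identifies the only nontrivial step.
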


Despite all these promising results, it turns out that plurality operator violates \emph{Universal Domain} and \emph{Collective Rationality} postulates. The violation of \emph{Universal Domain} is because AWPR is not defined for profiles that cause ties, which means that it cannot take as input every possible profile $\calL \in \comp{\AF}^n$. However, a weaker version of \emph{Universal Domain} can be defined.

\begin{quote}
{\bf No-Tie Universal Domain}
An aggregation operator $F$ can take as input all profiles $\calL = (L_1,\ldots,L_n)$ such that $\calL$ does not cause a tie and $\calL\in \comp{\AF}^n$.
\end{quote}  

Since there are no restrictions (other than having no ties) on how labellings are defined, AWPR satisfies \emph{No-Tie Universal Domain}. Note that one might be tempted to make AWPR satisfy \emph{Universal Domain} by adding a deterministic\footnote{The use of a non-deterministic tie-breaking rule has its own issues too, such as producing different outcomes given the same profile.} tie-breaking rule to deal with ties. However, as we show in the next section, the use of any tie-breaking rule would result in violating \emph{Anonymity}, and/or \emph{Strong Systematicity}. While the violation of \emph{Universal Domain} represents a minor inconvenience that can be justified, the violation of \emph{Collective Rationality} poses a serious issue as the collective decision is usually expected to be reasonable. The following example shows how AWPR violates \emph{Collective Rationality}.

\begin{example}\label{example:maincounter}
Suppose argument $c$ has two defeaters, $a$ and $b$, and argument $a$ (resp. $b$) defeats and is defeated by argument $a'$ (resp. $b'$). Suppose we have $3$ agents, with votes as shown in Figure \ref{fig:counter_example}. We have $[M(\calL)](c)= \myout$, but it is not the case that $[M(\calL)](a)= \myin$ or $[M(\calL)](b)= \myin$.
\end{example}

\begin{figure}[htb]
  \centering
  \includegraphics[scale=0.8]{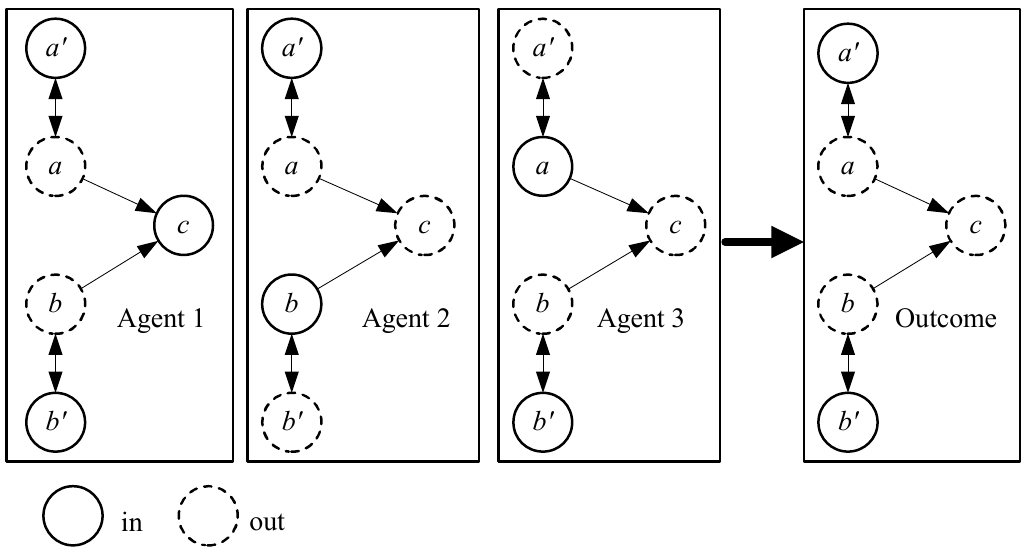}
  \caption{Counterexample to \emph{Collective Rationality}}\label{fig:counter_example}
\end{figure}

Interestingly, the above counterexample demonstrates a variant of the discursive dilemma \cite{list:pettit:2002} in the context of argument evaluation, which itself is a variant of the well-known Condorcet paradox.

\section{The Impossibility of Good Aggregation Operators}\label{section:impossibility}

In the previous section, we analysed a particular judgment aggregation operator (namely, argument-wise plurality rule). We showed that while it satisfies most key properties, it fails to satisfy \emph{Universal Domain} and \emph{Collective Rationality}. In this section, we show a couple of impossibility results that involve these two postulates. The following result shows that introducing a tie-breaking rule to satisfy \emph{Universal Domain} would result in violating \emph{Anonymity} and/or \emph{Strong Systematicity}.

\begin{subequations}

\begin{theorem}\label{thm:impossibility1}
There exists an argumentation framework $\AF$ such that, for any set of agents whose cardinality is divisible by three, there exists no labelling aggregation operator satisfying \emph{Universal Domain}, \emph{Anonymity} and \emph{Strong Systematicity}.

\begin{proof}
It is enough to assume an $\AF$ that contains at least one argument $a$ which can feasibly take on any label, i.e. there exist complete labellings $L_\myin, L_\myout$ and $L_\myundec$ over $\AF$ such that $L_\myin(a) = \myin$, $L_\myout(a) = \myout$ and $L_\myundec(a) = \myundec$.
Divide $n$ agents into 3 groups $G_1$, $G_2$, $G_3$ of equal size. By \emph{Universal Domain}, all profiles consisting of legal labellings are valid input. Assume a profile in which everyone in $G_1$ provides labelling $L_\myin$, everyone in $G_2$ provides  $L_\myout$ and everyone in $G_3$ provides  $L_\myundec$. For now let's denote this profile by $\calL = ([G_1:L_\myin], [G_2:L_\myout], [G_3:L_\myundec])$. Now, assume for contradiction that $F$ is an aggregation operator for $\AF$ satisfying \emph{Universal Domain}, \emph{Anonymity} and \emph{Strong Systematicity}. Let $\rho:\{\myin,\myout,\myundec\} \rightarrow \{\myin,\myout,\myundec\}$ be any permutation on the set of labels such that $\rho(l) \neq l$ for all labels $l$ (for instance, $\rho(\myin) = \myout$, $\rho(\myout) = \myundec$, $\rho(\myundec) = \myin$), and let $\calL'$ denote the profile $([G_1:L_{\rho(\myin)}], [G_2:L_{\rho(\myout)}], [G_3:L_{\rho(\myundec)}])$. Since $L'_i(a)=\rho(L_i(a))$ for all $i \in\Ag$, \emph{Strong Systematicity} implies $[F(\calL')](a) = \rho([F(\calL)](a))$. However, we chose $\rho$ s.t. $\rho(l) \neq l$. Hence, $[F(\calL')](a) \neq [F(\calL)](a)$. But \emph{Anonymity} implies $[F(\calL)](a) = [F(\calL')](a)$. Contradiction. Hence no such $F$ can exist.

\end{proof}
\end{theorem}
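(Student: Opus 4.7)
The plan is to exploit a tension between treating agents symmetrically (Anonymity) and treating labels symmetrically (Strong Systematicity): one builds a profile that is maximally symmetric under both kinds of permutation simultaneously, and then shows the aggregator is forced to violate one of the two. For the AF, it suffices to pick any framework containing some argument $a$ whose three possible labels are each realised by some complete labelling; the murder framework of Example~\ref{example:murder} already supplies this at $a_1$, whose labels under $L$, $L'$, $L''$ are $\myin$, $\myout$, and $\myundec$ respectively. Call the three witness labellings $L_\myin$, $L_\myout$, and $L_\myundec$, indexed by the label they assign to $a$.

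With $|\Ag| = 3k$, partition the agents into three equal-sized blocks $G_1, G_2, G_3$ and let $\calL$ be the profile in which every agent in $G_j$ submits $L_{l_j}$, where $(l_1, l_2, l_3) = (\myin, \myout, \myundec)$. By \emph{Universal Domain}, $F(\calL)$ is defined. Next fix a label permutation $\rho$ with no fixed points (any $3$-cycle on $\{\myin,\myout,\myundec\}$ works), and form the profile $\calL'$ in which every agent in $G_j$ submits $L_{\rho(l_j)}$ instead. The crucial observation is that $\calL'$ is merely an agent-permutation of $\calL$: since $\rho$ permutes the three witness labellings among themselves and the blocks have equal size, the two profiles agree as multisets of labellings. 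Hence \emph{Anonymity} forces $F(\calL) = F(\calL')$, and in particular $[F(\calL)](a) = [F(\calL')](a)$.

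On the other hand, by construction $L'_i(a) = \rho(L_i(a))$ for every $i \in \Ag$, so \emph{Strong Systematicity} (taken with $b = a$) yields $[F(\calL')](a) = \rho([F(\calL)](a))$. Combining this with the Anonymity conclusion gives $[F(\calL)](a) = \rho([F(\calL)](a))$, contradicting the fixed-point-freeness of $\rho$. There is no genuinely hard step here; the only real subtlety is the dual role played by $\rho$, which must simultaneously serve as a label permutation (triggering Strong Systematicity) and induce an agent-level permutation of the profile (triggering Anonymity). The divisibility assumption $3 \mid |\Ag|$ is used exactly to make this dual action coherent, by forcing the three blocks to have equal cardinality.
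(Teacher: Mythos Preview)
Your proof is correct and follows essentially the same approach as the paper's: construct a three-way symmetric profile, apply a fixed-point-free label permutation to obtain a second profile, and derive a contradiction by pitting Strong Systematicity against Anonymity at the witness argument. If anything, you are slightly more explicit than the paper in justifying why $\calL'$ is an agent-permutation of $\calL$ (the paper asserts the Anonymity step without spelling this out).
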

\end{subequations}

The previous result can be read in two ways: First, the AWPR cannot be made to satisfy \emph{Universal Domain} without violating \emph{Strong Systematicity} or \emph{Anonymity}. Second, there exists no aggregation operator at all that satisfies \emph{Universal Domain}, \emph{Strong Systematicity} and \emph{Anonymity}.

Note that the previous theorem was stated for a set of agents divisible by three. Essentially, three-way ties would only happen if the cardinality of the agents is divisible by three (since there are only three possible labels for each argument, and each individual has to submit one label for each argument). Hence, one might wonder whether we could rule out the possibility of three-way ties, by assuming $n$ cannot be a multiple of three.\footnote{It was shown in \cite{moulin2014strategy} that \emph{Anonymity}, \emph{Neutrality} (a weaker version of \emph{Strong Systematicity}) and \emph{Resolution} can be satisfied together if and only if the number of alternatives cannot be written as the sum of non-trivial dividers of the number of voters. \emph{Resolute} rules always produce a single outcome, so it resembles \emph{No-Tie Universal Domain}. Also, in our settings, the number of candidates is three. So this result says that we can have these postulates together if the number of voters is not a multiple of three.} However, with even number of agents, we can show that there is still a large class of $\AF$s which do not have an operator satisfying those three postulates without violating \emph{Collective Rationality}.

\begin{theorem}
There exists an argumentation framework AF such that, for any set of agents of even cardinality, there exists no labelling aggregation operator satisfying \emph{Universal Domain}, \emph{Anonymity}, \emph{Strong Systematicity} and \emph{Collective Rationality}.

\begin{proof}
It is enough to assume an AF that contains at least one argument $a$ that can feasibly take on just two out of the three possible labels. For concreteness suppose $a$ can only take on labels $\myout$ and $\myundec$ (An example of such a framework and an argument can be seen in the proof of Theorem \ref{thm:impossibility2} below, in which $c$ can only be either $\myout$ or $\myundec$). Let $L_\myundec$ and $L_\myout$ be two complete labellings such that $L_\myundec(a) = \myundec$ and $L_\myout(a) = \myout$. Divide the agents into two groups $G_1$, $G_2$ of equal size. By \emph{Universal Domain}, all profiles consisting of legal labellings are valid input, so assume a profile in which everyone in $G_1$ provides labelling $L_\myundec$ and everyone in $G_2$ provides $L_\myout$. Denote the resulting profile by $\calL = ([G_1: L_\myundec], [G_2: L_\myout])$ and assume for contradiction that $F$ is an aggregation operator for this AF that satisfies \emph{Universal Domain}, \emph{Anonymity}, \emph{Strong Systematicity} and \emph{Collective Rationality}. Let $\rho$ be the permutation that swaps $\myundec$ and $\myout$, i.e., $\rho(\myundec) = \myout$ and $\rho(\myout) = \myundec$, and let $\calL' = ([G_1: L_\myout], [G_2: L_\myundec])$.\footnote{Note here that all labellings in the profile $\calL'$ are still \emph{complete} labellings. This is because $\rho$ does not uniformly exchange all labels in a given labelling, it is just a permutation on the set of labels.} By \emph{Anonymity} we know $[F(\calL)](a) = [F(\calL')](a)$. Then it cannot be that $[F(\calL)](a) = \myundec$, for if so then \emph{Strong Systematicity} would imply $[F(\calL')](a) = \rho(\myundec) = \myout \neq [F(\calL)](a)$, and similarly it cannot be that  $[F(\calL)](a) = \myout$. Thus we must have $[F(\calL)](a) = \myin$. But by \emph{Collective Rationality} $[F(\calL)](a) \in \{\myundec, \myout\}$. Contradiction.
\end{proof}
\end{theorem}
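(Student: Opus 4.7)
The plan is to exploit the symmetry between two labels in a situation where the third label is infeasible for some argument. The first step is to exhibit an argumentation framework $\AF$ containing an argument $c$ whose feasible labels, across all of $\comp{\AF}$, are exactly two, say $\myout$ and $\myundec$. A concrete choice is: $c$ has two defeaters $b_1,b_2$, where $b_1$ is a self-defeater, and $b_2$ lies on a 2-cycle with a fresh argument $b_3$. A routine case analysis shows that in every complete labelling $b_1$ is $\myundec$, while $(b_2,b_3)$ takes one of $(\myin,\myout)$, $(\myout,\myin)$, or $(\myundec,\myundec)$; hence $c$ always has a defeater that is not $\myout$ (namely $b_1$), so $c$ is never $\myin$, and both $\myout$ and $\myundec$ are realised at $c$ by choosing the status of $b_2$ appropriately.

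Now fix an even $n$, partition $\Ag$ into two equal groups $G_1,G_2$, and pick complete labellings $L_{\myout},L_{\myundec}\in\comp{\AF}$ with $L_{\myout}(c)=\myout$ and $L_{\myundec}(c)=\myundec$. Define $\calL$ to be the profile in which $G_1$ votes $L_{\myundec}$ and $G_2$ votes $L_{\myout}$; by \emph{Universal Domain}, $F(\calL)$ is defined. Let $\calL'$ be the profile obtained by swapping the two groups. Then \emph{Anonymity} yields $[F(\calL)](c) = [F(\calL')](c)$, since $\calL'$ is merely a re-indexing of $\calL$.

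Next let $\rho$ be the permutation on $\{\myin,\myout,\myundec\}$ that swaps $\myout$ and $\myundec$ while fixing $\myin$. For every agent $i$ we have $L_i(c)=\rho(L'_i(c))$, so \emph{Strong Systematicity} applied with $a=b=c$ gives $[F(\calL)](c) = \rho([F(\calL')](c))$. Combining with the equality from \emph{Anonymity}, $[F(\calL)](c)$ must be a fixed point of $\rho$, and the only such label is $\myin$. But \emph{Collective Rationality} forces $F(\calL)\in\comp{\AF}$, whereas by construction no complete labelling of $\AF$ assigns $\myin$ to $c$. This contradiction rules out the existence of any such $F$.

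The main obstacle is the first step, namely producing an $\AF$ in which some argument can take only two of the three labels across complete labellings; once that is in hand, the rest is a symmetry argument closely paralleling the proof of Theorem \ref{thm:impossibility1}, with the twist that the fixed point forced by $\rho$ now lands on the single label forbidden by \emph{Collective Rationality}, rather than producing an immediate inconsistency between \emph{Anonymity} and \emph{Strong Systematicity}. One should also verify that $L_{\myout}$ and $L_{\myundec}$ really do lie in $\comp{\AF}$, which follows directly from the case analysis above.
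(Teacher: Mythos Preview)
Your proof is correct and follows essentially the same approach as the paper's: split the agents in half, use \emph{Anonymity} to equate the outputs on the two swapped profiles, use \emph{Strong Systematicity} with the transposition $\rho$ swapping $\myout$ and $\myundec$ to force $[F(\calL)](c)$ to be a fixed point of $\rho$ (hence $\myin$), and then invoke \emph{Collective Rationality} to derive the contradiction. The only cosmetic difference is your choice of witness $\AF$: you use a self-attacking defeater plus a $2$-cycle to pin $c$'s justification status to $\{\myout,\myundec\}$, whereas the paper points to the framework of Theorem~\ref{thm:impossibility2} (two mutually attacking arguments both attacking $c$); both constructions work for the same reason, and your phrasing of the core step as ``$[F(\calL)](c)$ must be a fixed point of $\rho$'' is a clean way to package what the paper does by eliminating $\myundec$ and $\myout$ in turn.
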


The careful reader can realise that \emph{Collective Rationality} can be substituted with \emph{Supportiveness} in the previous theorem. As for the proof, the last sentence becomes: ``Thus we must have $[F(\calL)](a) = \myin$. But by \emph{Supportiveness} $[F(\calL)](a) \in \{\myundec, \myout\}$. Contradiction''.

However, one might argue that \emph{Strong Systematicity} is quite a strong condition. Treating $\myin$, $\myout$, and $\myundec$ differently can be tolerated. Then, it is interesting to ask: ``Does there exist an operator that satisfies \emph{Universal Domain}, \emph{Weak Systematicity}, and \emph{Anonymity}?''. The answer for this question is positive. Consider a modified version of the AWPR that deals with ties by labelling every argument that has a tie with $\myundec$. One can show that this operator satisfies these three properties together. However, this operator still violates \emph{Collective Rationality} (Example \ref{example:maincounter} holds as a counterexample). In fact, we show that any operator that satisfies \emph{Universal Domain}, \emph{Weak Systematicity}, and \emph{Anonymity}, would violate either \emph{Collective Rationality} or \emph{Unanimity}.

\begin{subequations}
\begin{theorem}\label{thm:impossibility2}
There exists an argumentation framework $\AF$ such that, for any set of agents of even cardinality, there exists no labelling aggregation operator satisfying \emph{Universal Domain}, \emph{Weak Systematicity}, \emph{Anonymity}, \emph{Collective Rationality}, and \emph{Unanimity}.

\begin{proof}
Consider the following argumentation framework. An argument $c$ is defeated by two arguments $a$ and $b$ which defeat each others.

\begin{figure}[htbp]
	\begin{center}
  \includegraphics[scale=0.9]{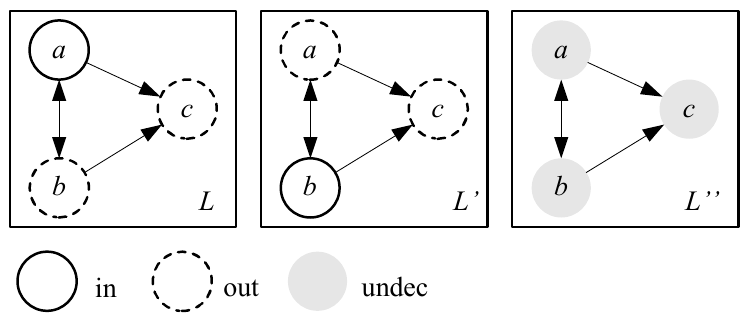}
	\end{center}
  \label{fig:counter_example2}
\end{figure}

Consider the two labellings $L = (\{a\}, \{b, c\}, \{\})$ and $L' = (\{b\}, \{a, c\}, \{\})$. Assume, towards a contradiction, that there exists an aggregation operator $F$ that satisfies \emph{Universal Domain}, \emph{Collective Rationality}, \emph{Weak Systematicity}, \emph{Anonymity} and \emph{Unanimity}. 

By \emph{Universal Domain}, we may consider any profile consisting of legal labellings. Consider the two profiles $\calL = (L, \ldots, L, L', \ldots, L')$ and $\calL' = (L', \ldots, L', L, \ldots, L)$. That is, in $\calL$ half the agents give $L$ and the other half give $L'$, and then in $\calL'$ the agents switch from $L$ to $L'$ and vice versa.

By \emph{Unanimity} we know
\begin{equation}
\label{eq.2.c_out}
[F(\calL)](c) = \myout.
\end{equation}
By \emph{Weak Systematicity} we also know
$[F(\calL)](a) = [F(\calL')](b)$. But since $\calL$ and $\calL'$ are permutations of each other we know $F(\calL) = F(\calL')$ by \emph{Anonymity} and so we obtain
\begin{equation}
\label{eq.2.a_b_same}
[F(\calL)](a) = [F(\calL)](b). 
\end{equation}
But there is no complete labelling simultaneously satisfying (\ref{eq.2.c_out}) and (\ref{eq.2.a_b_same}). Contradiction. Hence no $F$ can exist.

\end{proof}
\end{theorem}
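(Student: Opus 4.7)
The plan is to construct a small argumentation framework that forces a clash between the five postulates. The natural candidate is the ``discursive dilemma'' shape already appearing in Example \ref{example:maincounter}: an argument $c$ defeated by two mutually-defeating arguments $a$ and $b$. On such a framework, I will work with the two complete labellings $L = (\{a\},\{b,c\},\{\})$ and $L' = (\{b\},\{a,c\},\{\})$; both label $c$ as $\myout$, but they disagree symmetrically on $a$ versus $b$. The profile I will build uses only these two labellings.

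Fix an even $n$ and consider the profile $\calL$ in which the first $n/2$ agents submit $L$ and the remaining $n/2$ submit $L'$. First I would pin down $[F(\calL)](c) = \myout$. Since \emph{Unanimity} as stated only constrains fully-unanimous profiles, I combine it with \emph{Independence} (which follows from \emph{Weak Systematicity} by taking $a = b$): every agent labels $c$ as $\myout$, so the vote vector $\calL(c)$ coincides with the one obtained from the all-$L$ profile, on which \emph{Unanimity} forces the collective label of $c$ to be $\myout$; \emph{Independence} then transfers this conclusion to $\calL$.

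Next I would exploit the $a$/$b$ symmetry. Let $\calL'$ be the profile obtained from $\calL$ by swapping the roles of $L$ and $L'$ across the two halves. Then $\calL'$ is a permutation of $\calL$, so \emph{Anonymity} gives $F(\calL) = F(\calL')$, and hence $[F(\calL)](b) = [F(\calL')](b)$. On the other hand, the vectors $\calL(a)$ and $\calL'(b)$ are pointwise identical (both consist of $n/2$ copies of $\myin$ followed by $n/2$ copies of $\myout$), so \emph{Weak Systematicity} gives $[F(\calL)](a) = [F(\calL')](b)$. Chaining these equalities yields $[F(\calL)](a) = [F(\calL)](b)$.

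Finally I would close with \emph{Collective Rationality}. Since $F(\calL)$ must be a complete labelling and $[F(\calL)](c) = \myout$, at least one defeater of $c$ — that is, $a$ or $b$ — must be labelled $\myin$. Combined with $[F(\calL)](a) = [F(\calL)](b)$, this forces both $a$ and $b$ to be $\myin$; but $a \rightharpoonup b$ and the $\myin$ clause of Definition \ref{definition:CompLabelling} then demands $[F(\calL)](b) = \myout$, a contradiction. The delicate step, I expect, is the first one: \emph{Unanimity} alone does not apply to the mixed profile $\calL$, so one must be careful to invoke \emph{Independence} (obtained from \emph{Weak Systematicity}) in order to lift the unanimous conclusion about $c$ from the all-$L$ profile to $\calL$.
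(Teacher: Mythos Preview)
Your proof is correct and follows exactly the paper's construction: the same three-argument framework, the same labellings $L$ and $L'$, the same half-and-half profile, and the same use of \emph{Weak Systematicity} plus \emph{Anonymity} to force $[F(\calL)](a) = [F(\calL)](b)$. Your treatment of the step $[F(\calL)](c) = \myout$ is in fact more careful than the paper's: the paper simply writes ``By \emph{Unanimity} we know $[F(\calL)](c) = \myout$'', whereas you correctly observe that \emph{Unanimity} as stated applies only to fully unanimous profiles and must be combined with \emph{Independence} (obtained from \emph{Weak Systematicity} by taking $a=b$) to transfer the conclusion from the all-$L$ profile to the mixed profile $\calL$.
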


\end{subequations}

One might note that all of the above theorems exploit the use of profiles that include ties. Then, one would ask: What if we relax \emph{Universal Domain} to \emph{No-Tie Universal Domain}? Do we still have impossibility results then? Following, we show that an aggregation operator which satisfies \emph{No-Tie Universal Domain} (but not necessarily \emph{Universal Domain}) cannot also satisfy \emph{Weak Systematicity}, \emph{Anonymity}, \emph{Collective Rationality}, and \emph{Supportiveness} together.

\begin{subequations}
\begin{theorem}\label{thm:impossibility3}
There exists an argumentation framework $\AF$ such that, for any set of agents whose cardinality is divisible by three, there exists no labelling aggregation operator satisfying \emph{No-Tie Universal Domain}, \emph{Weak Systematicity}, \emph{Anonymity}, \emph{Collective Rationality}, and \emph{Supportiveness}.

\begin{proof}
Consider the following argumentation framework. An argument $a$ is defeated by two arguments $b$ and $c$. Argument $b$ (resp. $c$) defeats and is defeated by argument $b'$ (resp. $c'$).

\begin{figure}[htbp]
	\begin{center}
  \includegraphics[scale=0.9]{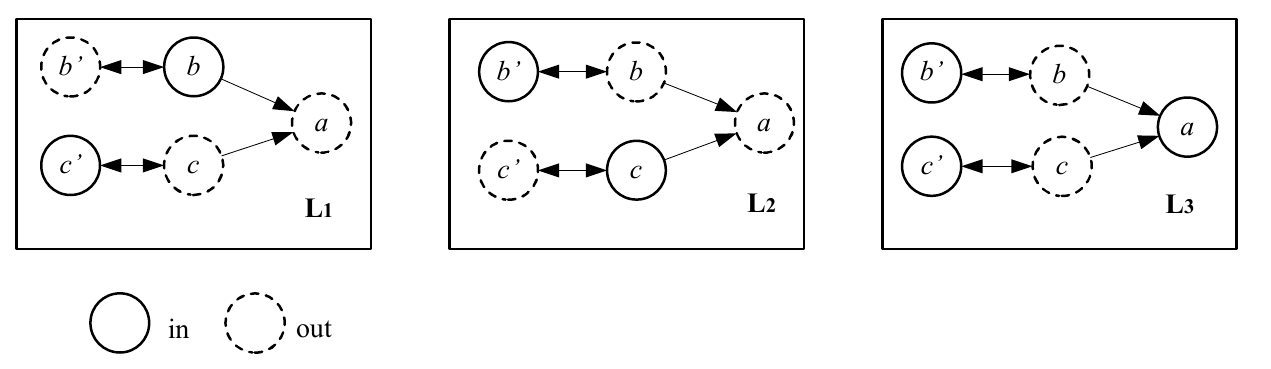}
	\end{center}
  \label{fig:counter_example5}
\end{figure}

Consider the three labellings $L_1 = (\{b, c'\},\{a, b', c\}, \{\})$, $L_2 = (\{b', c\},\{a, b, c'\},$ $\{\})$ and $L_3 = (\{a, b', c'\},\{b, c\}, \{\})$. 

Assume, towards a contradiction, that there exists an aggregation operator $F$ that satisfies \emph{No-Tie Universal Domain}, \emph{Collective Rationality}, \emph{Weak Systematicity}, \emph{Anonymity} and \emph{Supportiveness}. 

By \emph{No-Tie Universal Domain}, we may consider any profile consisting of legal labellings as long as it does not cause a tie. We consider here three agents, but the same proof can be shown for any set of agents that is divisible by three. Consider the three profiles $\calL = (L_1,L_2,L_3)$, $\calL' = (L'_1,L'_2,L'_3) = (L_3,L_1,L_2)$ and $\calL'' = (L''_1,L''_2,L''_3) = (L_2,L_3,L_1)$.

Since $\forall i, L_i(a) = L'_i(c)$, then by \emph{Weak Systematicity} we know:

\begin{equation}
\label{eq.3.ws1}
[F(\calL)](a) = [F(\calL')](c)
\end{equation}

But since $\calL$ and $\calL'$ are permutations of each other we know $[F(\calL)] = [F(\calL')]$ by \emph{Anonymity} and so we obtain
\begin{equation}
\label{eq.3.an1}
[F(\calL)](c) = [F(\calL')](c). 
\end{equation}

From Eq.\ref{eq.3.ws1} and Eq.\ref{eq.3.an1}:

\begin{equation}
\label{eq.3.a_c_same}
[F(\calL)](a) = [F(\calL)](c). 
\end{equation}

Similarly, since $\forall i, L_i(b) = L''_i(c)$, then by \emph{Weak Systematicity} we know:

\begin{equation}
\label{eq.3.ws2}
[F(\calL)](b) = [F(\calL'')](c)
\end{equation}

But since $\calL$ and $\calL''$ are permutations of each other we know $[F(\calL)] = [F(\calL'')]$ by \emph{Anonymity} and so we obtain
\begin{equation}
\label{eq.3.an2}
[F(\calL)](c) = [F(\calL'')](c). 
\end{equation}

From Eq.\ref{eq.3.ws2} and Eq.\ref{eq.3.an2}:

\begin{equation}
\label{eq.3.b_c_same}
[F(\calL)](b) = [F(\calL)](c). 
\end{equation}

From Eq.\ref{eq.3.a_c_same} and Eq.\ref{eq.3.b_c_same}:

\begin{equation}
\label{eq.3.a_b_c_same}
[F(\calL)](a) = [F(\calL)](b) = [F(\calL)](c). 
\end{equation}

The last equation suggests that $a$, $b$, and $c$ have the same collective labelling. However, by \emph{Collective Rationality}, the only legal labelling that satisfy Eq.\ref{eq.3.a_b_c_same} is $\myundec$:
\begin{equation}
\label{eq.3.a_b_c_undec}
[F(\calL)](a) = [F(\calL)](b) = [F(\calL)](c) = \myundec. 
\end{equation}
However, $F$ satisfies \emph{Supportiveness} by assumption. Contradiction.

\end{proof}
\end{theorem}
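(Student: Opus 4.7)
My plan is to build a concrete argumentation framework in which three carefully chosen labellings, when assembled into a profile together with its two cyclic rotations, force the collective labels of three distinct arguments to coincide, and then to show that the only complete label compatible with that coincidence contradicts \emph{Supportiveness}. The key design choice will be to pick the framework so that (i) there is one argument $a$ with two independent attackers $b,c$ whose own situations are symmetric (each with its own mutual-defeat partner $b',c'$), and (ii) for each pair among $\{a,b,c\}$ there exists a cyclic rotation of the profile that, argument-wise, maps one to the other. The framework described in the statement (with $b \leftrightarrow b'$ and $c \leftrightarrow c'$ as mutual defeaters of $a$'s attackers) gives me exactly this symmetry.

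The main technical work is to choose three complete labellings $L_1, L_2, L_3$ such that across $\calL=(L_1,L_2,L_3)$, the $|\Ag|$-tuples $\calL(a)$, $\calL(b)$ and $\calL(c)$ are \emph{each other's permutations}, so that after rotating the profile (which \emph{Anonymity} leaves fixed) I can use \emph{Weak Systematicity} to identify the collective labels of $a$, $b$, and $c$. Concretely, I will verify that with $L_1=(\{b,c'\},\{a,b',c\},\emptyset)$, $L_2=(\{b',c\},\{a,b,c'\},\emptyset)$ and $L_3=(\{a,b',c'\},\{b,c\},\emptyset)$, the columns $\calL(a)=(\myout,\myout,\myin)$, $\calL(b)=(\myin,\myout,\myout)$, and $\calL(c)=(\myout,\myin,\myout)$ are indeed rotations of one another, matched by the cyclic shifts of $\calL$ itself. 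Checking that each $L_i$ is complete on the chosen AF is routine and I would just verify it quickly for $L_1$ and appeal to the symmetry $b \leftrightarrow c$, $b' \leftrightarrow c'$ for $L_2$.

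Once the three profiles $\calL, \calL', \calL''$ (the two non-trivial cyclic rotations) are in place, I apply \emph{Weak Systematicity} to the pair $(\calL,\calL')$ on arguments $(a,c)$, and \emph{Anonymity} to collapse $F(\calL')$ to $F(\calL)$, yielding $[F(\calL)](a)=[F(\calL)](c)$. I then do the same for the pair $(\calL,\calL'')$ on arguments $(b,c)$, yielding $[F(\calL)](b)=[F(\calL)](c)$. Combining, $a,b,c$ all receive the same collective label.

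The closing step is where the specific AF pays off: in any complete labelling of this framework, $a,b,c$ can all share a common label only if they are all $\myundec$ (they cannot all be $\myin$ because $b\rightharpoonup a$, and they cannot all be $\myout$ because $a$ is $\myout$ would require an $\myin$ attacker among $\{b,c\}$), so \emph{Collective Rationality} forces $[F(\calL)](a)=\myundec$. However, inspecting the column $\calL(a)=(\myout,\myout,\myin)$ shows no agent labels $a$ as $\myundec$, contradicting \emph{Supportiveness}. The only part I expect to require care is checking that the three chosen labellings really are complete and that the three columns form a single cyclic orbit matching the rotations of $\calL$; the rest is a short logical chain.
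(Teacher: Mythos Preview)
Your proposal is correct and essentially identical to the paper's proof: same framework, same three complete labellings $L_1,L_2,L_3$, same profile $\calL=(L_1,L_2,L_3)$ with its two cyclic rotations, the same application of \emph{Weak Systematicity} plus \emph{Anonymity} to force $[F(\calL)](a)=[F(\calL)](b)=[F(\calL)](c)$, and the same closing argument via \emph{Collective Rationality} and \emph{Supportiveness}. The only small item neither you nor the paper spells out explicitly is the routine check that $\calL$ is tie-free on every argument (so that \emph{No-Tie Universal Domain} applies), which is immediate since each of $a,b,c,b',c'$ receives a $2$-to-$1$ split.
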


\end{subequations}
One can draw a connection between this result and the previous one. Relaxing \emph{Universal Domain} to \emph{No-Tie Universal Domain}, introduces another impossibility result, in which \emph{Unanimity} is replaced with the stronger postulate \emph{Supportiveness}. Additionally, one can compare this result to the analogue of Arrow's theorem in judgment aggregation \cite{list2010introduction}, which involves \emph{Unanimity}, \emph{Independence}, and \emph{Non-dictatorship}, the weaker versions of \emph{Supportiveness}, \emph{Weak Systematicity}, and \emph{Anonymity} respectively in our theorem. However, their result also involves \emph{completeness}, i.e. no proposition can be collectively undecided, which we do not have as a condition in our result.

The above impossibility results highlight a major barrier to reaching good collective judgment about argument evaluation in general. These establish the limits of aggregation in the context of argumentation, and come in accordance with the similar topics of aggregation such as preference aggregation \cite{arrow:1951} and judgment aggregation \cite{list:pettit:2002}. Unfortunately, there is no escape from violating the involved conditions or accepting irrational aggregate argument labellings without somewhat lowering our standards in terms of desirable criteria.

\section{Collective Rationality Postulates}\label{section:avoiding}

In this section, we characterise \emph{Collective Rationality} in terms of conditions that need to be satisfied by profiles. To do this, we need to go back to the definition of legal (i.e. \emph{complete}) labelling (Definition \ref{definition:CompLabelling}), and break it down into further constituents defined over the outcome of an aggregation operator.  

The following condition, which we call \emph{IN-Collective Rationality (IN-CR)}, requires that if an argument $a$ is collectively accepted by the agents, then the agents must collectively reject all counter-arguments against $a$.

\begin{quote}{\bf IN-Collective Rationality (IN-CR)}
For any profile $\calL$ and $a \in \calA$, if $[F(\calL)](a)= \myin$ then:
\[\nexists b \in \calA, \text{ s.t. } (b \rightharpoonup a \wedge [F(\calL)](b)= \myin) \tag{\bf IN-CR1}\]
and
\[\nexists b \in \calA, \text{ s.t. } (b \rightharpoonup a \wedge [F(\calL)](b)= \myundec) \tag{\bf IN-CR2}\]
\end{quote}

Note that IN-CR1, the first part of IN-CR, represents the the condition of \emph{conflict-freeness} applied on the output. The condition of \emph{conflict-freeness} is usually agreed on as a minimal reasonable condition in argument evaluation.

We present now the \emph{OUT-Collective Rationality (OUT-CR)} condition. Intuitively, this condition means that if an argument $a$ is collectively rejected by the agents, then the agents must also collectively agree on accepting at least one of the counter-arguments against $a$. 
\begin{quote}{\bf OUT-Collective Rationality (OUT-CR)}
For any profile $\calL$ and $a \in \calA$, if $[F(\calL)](a)= \myout$ then $\exists b \in \calA$, such that $b \rightharpoonup a$ and $[F(\calL)](b)= \myin$.
\end{quote}
We present now the \emph{UNDEC-Collective Rationality (UNDEC-CR)} condition. An argument must be labelled $\myundec$ if and only if: (i) it is not the case that all of its defeaters are $\myout$, that is, at least one of its defeaters is $\myundec$; and (ii) none of its defeaters is $\myin$. 
\begin{quote}{\bf UNDEC-Collective Rationality (UNDEC-CR)}
For any profile $\calL$ and $a \in \calA$, if $[F(\calL)](a)= \myundec$ then:
\[\nexists b \in \calA,\text{ s.t. }(b \rightharpoonup a \wedge[F(\calL)](b)= \myin)\tag{\bf UNDEC-CR1}\]
and 
\[\exists b \in \calA, \text{ s.t. }(b \rightharpoonup a\wedge [F(\calL)](b)= \myundec)\tag{\bf UNDEC-CR2}\]
\end{quote}

The following result follows immediately from the definitions.
\begin{proposition}\label{prop:Condorcet}
An argument aggregation operator $F$ satisfies \emph{Collective Rationality} if and only if for each profile $\calL = (L_1, \ldots, L_n)$ in its domain, it satisfies the \emph{IN-CR}, \emph{OUT-CR}, and \emph{UNDEC-CR} conditions.
\end{proposition}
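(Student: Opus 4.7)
The plan is to show that the conjunction of IN-CR, OUT-CR, and UNDEC-CR is a term-by-term reformulation of the three clauses defining a complete labelling in Definition \ref{definition:CompLabelling}, when those clauses are applied to the output $F(\calL)$. Since \emph{Collective Rationality} is by definition the requirement that $F(\calL)\in\comp{\AF}$, the equivalence then follows directly. I would therefore fix an arbitrary profile $\calL$ in the domain of $F$ and argue both directions of the biconditional by unfolding definitions.

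For the forward direction, I would assume \emph{Collective Rationality} holds, so $F(\calL)$ satisfies the three clauses of Definition \ref{definition:CompLabelling} on each argument $a\in\calA$. The second clause (the $\myout$-case) is literally the statement of OUT-CR, so that direction is immediate. The third clause (the $\myundec$-case) is a conjunction of ``some defeater is $\myundec$'' and ``no defeater is $\myin$,'' which are precisely UNDEC-CR2 and UNDEC-CR1. The only step requiring a very small argument is the first clause (the $\myin$-case): it asserts that every defeater of $a$ is labelled $\myout$, which, because every argument receives exactly one label from $\{\myin,\myout,\myundec\}$, is logically equivalent to ``no defeater of $a$ is labelled $\myin$ and no defeater of $a$ is labelled $\myundec$,'' i.e.\ the conjunction of IN-CR1 and IN-CR2.

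For the converse, I would assume IN-CR, OUT-CR, and UNDEC-CR all hold for $F(\calL)$, and verify each clause of the complete labelling definition on each argument $a$. If $[F(\calL)](a)=\myin$, then IN-CR1 and IN-CR2 together rule out every label other than $\myout$ for the defeaters of $a$, which establishes the $\myin$-clause by the same trichotomy argument as above. If $[F(\calL)](a)=\myout$, then OUT-CR directly gives the $\myout$-clause. If $[F(\calL)](a)=\myundec$, then UNDEC-CR2 and UNDEC-CR1 together give exactly the two subconditions of the $\myundec$-clause. Therefore $F(\calL)\in\comp{\AF}$, which is \emph{Collective Rationality}.

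I do not expect any genuine obstacle: the whole proposition is essentially a bookkeeping equivalence, with the one mild subtlety being the trichotomy observation in the $\myin$-case (that ``all defeaters $\myout$'' splits into ``no defeater $\myin$'' plus ``no defeater $\myundec$''). Because of this, the proof can be written compactly as a single pass noting that the three clauses of Definition \ref{definition:CompLabelling} are logically equivalent, clause by clause, to IN-CR1 $\wedge$ IN-CR2, to OUT-CR, and to UNDEC-CR1 $\wedge$ UNDEC-CR2 respectively, and then quantifying over all $a\in\calA$ and all $\calL$ in the domain of $F$.
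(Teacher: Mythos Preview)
Your proposal is correct and matches the paper's approach: the paper simply states that the result ``follows immediately from the definitions'' without giving any further argument. Your write-up merely spells out this immediate correspondence clause by clause, including the trichotomy observation for the $\myin$-case, which is exactly the (trivial) content of the proposition.
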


\section{Plurality Rule with Classical Semantics}\label{section:semantics}

In this section, we analyse the performance of AWPR with respect to \emph{Collective Rationality} when agents labellings are restricted to some classical semantics (i.e. \emph{complete}, \emph{grounded}, \emph{stable}, \emph{semi-stable}, and \emph{preferred}). This investigation gives a novel meaning to classical semantics in social choice settings. Rather than simply being compared by their logical rigour from the perspective of a single agent, semantics are compared based on the extent to which they facilitate collectively rational agreement among agents.

Our strategy will be based on the following approach. Since, by Proposition \ref{prop:Condorcet}, \emph{Collective Rationality} arises iff \emph{IN-CR}, \emph{OUT-CR}, and \emph{UNDEC-CR} are satisfied, it is enough to check whether AWPR satisfies those properties.

\subsection{Complete Semantics}
Since the complete semantics generalises other classical semantics, we provide analysis for it first. Every property that is satisfied by AWPR when individuals' labellings are \emph{complete} labellings would be also satisfied by AWPR when individuals' labellings are restricted to the other classical semantics that we consider. 

It is very interesting to see that, as the proposition below shows, when agents collectively accept an argument, the structure of the AWPR will ensure that they will not collectively accept any of its defeaters:
\begin{subequations}
\begin{proposition}\label{lemma:nocollectivein}
AWPR satisfies \emph{IN-CR1}. Using the argument-wise plurality rule, given any profile $\calL = (L_1, \ldots, L_n)$, if an argument $a$ is collectively accepted, none of its defeaters will be collectively accepted. Formally, if $[M(\calL)](a)= \myin$ for some arbitrary $a \in \calA$ then $\nexists b \in \calA$, such that $b \rightharpoonup a$ and $[M(\calL)](b)= \myin$.
\begin{proof}
Suppose that $[M(\calL)](a)= \myin$ holds. By definition:

\begin{equation} \label{equation.le.1}
|\{i: L_{i}(a) = \myin \}|> |\{i: L_{i}(a)= \myout\}|
\end{equation}
Since each $L_i$ is a legal labelling, an agent who votes $\myin$ for $a$ must also vote $\myout$ for each defeater of $a$. Therefore:
\begin{equation}\label{equation.le.2}
\forall b \rightharpoonup a \ \ \ |\{i: L_{i}(b) = \myout \}| \geq |\{i: L_{i}(a)= \myin\}|
\end{equation}\
We want to show that: $\nexists b \in \calA \ \mbox{such that} \ b \rightharpoonup a \ \mbox{and} \ [M(\calL)](b)= \myin$

Assume (towards contradiction) that the contrary holds. That is, $\exists b^{'} \in \calA$ such that $b^{'} \rightharpoonup a$ and $[M(\calL)](b')= \myin$. Then:

\begin{equation}\label{equation.le.3}
|\{i: L_{i}(b^{'}) = \myin \}|> |\{i: L_{i}(b^{'})= \myout\}| 
\end{equation}


Since every agent who voted $\myin$ for $b^{'}$ would have voted $\myout$ for $a$, we have:

\begin{equation}\label{equation.le.4}
|\{i: L_{i}(a) = \myout \}| \geq |\{i: L_{i}(b^{'})= \myin\}| 
\end{equation}

By Eq.\ref{equation.le.3} and Eq.\ref{equation.le.4}:

\begin{equation}\label{equation.le.5}
|\{i: L_{i}(a) = \myout \}|> |\{i: L_{i}(b^{'})= \myout\}|
\end{equation}

\noindent while from Eq.\ref{equation.le.2} and Eq.\ref{equation.le.5} we have that:

\begin{equation}
|\{i: L_{i}(a) = \myout \}|> |\{i: L_{i}(a)= \myin\}|
\end{equation}

But this contradicts Eq.\ref{equation.le.1} and the assumption that $[M(\calL)](a)= \myin$.
\end{proof}
\end{proposition}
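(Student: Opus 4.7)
The plan is to proceed by contradiction, leveraging the fact that each individual labelling $L_i$ is legal (in particular, it satisfies the first clause of the complete-labelling definition): whenever an agent assigns $\myin$ to an argument, she must assign $\myout$ to every defeater of that argument. This single structural observation is the engine of the proof; the rest is bookkeeping with the AWPR definition.

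First, I would unpack what $[M(\calL)](a) = \myin$ means under AWPR: the number of agents labelling $a$ as $\myin$ strictly exceeds the number labelling $a$ as $\myout$. Next, for every defeater $b$ of $a$, the complete-labelling condition gives the key ``transfer'' inequality
\[
|\{i : L_i(b) = \myout\}| \;\geq\; |\{i : L_i(a) = \myin\}|,
\]
since every agent who votes $\myin$ for $a$ is forced to vote $\myout$ on each of $a$'s defeaters.

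Now suppose, towards a contradiction, that some defeater $b'$ of $a$ also satisfies $[M(\calL)](b') = \myin$. Applying the same two ideas in the opposite direction yields: (i) $|\{i : L_i(b') = \myin\}| > |\{i : L_i(b') = \myout\}|$ by the AWPR definition; and (ii) $|\{i : L_i(a) = \myout\}| \geq |\{i : L_i(b') = \myin\}|$ because $b' \rightharpoonup a$ and each $L_i$ is legal. Chaining these with the transfer inequality applied to $b'$ as a defeater of $a$ produces
\[
|\{i : L_i(a) = \myout\}| \;\geq\; |\{i : L_i(b') = \myin\}| \;>\; |\{i : L_i(b') = \myout\}| \;\geq\; |\{i : L_i(a) = \myin\}|,
\]
which directly contradicts the starting hypothesis that $\myin$ strictly outpolled $\myout$ on $a$.

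There is no real obstacle here; the proof is purely a symmetric counting argument, and the main thing to be careful about is making sure each inequality is strict or non-strict in the correct direction (the two strict inequalities come from the two AWPR hypotheses, while the two non-strict inequalities come from the legal-labelling constraint between $a$ and its defeater $b'$). Once the chain is assembled, it directly contradicts the assumption $[M(\calL)](a) = \myin$, finishing the proof.
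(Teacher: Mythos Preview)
Your proposal is correct and follows essentially the same approach as the paper's own proof: both proceed by contradiction, derive the same two ``transfer'' inequalities from the complete-labelling constraints (voting $\myin$ on $a$ forces $\myout$ on each defeater, and voting $\myin$ on a defeater $b'$ forces $\myout$ on $a$), and chain them with the two strict AWPR inequalities to obtain $|\{i: L_i(a)=\myout\}| > |\{i: L_i(a)=\myin\}|$, contradicting the hypothesis. Your presentation as a single chain of four inequalities is slightly more compact, but the content is identical.
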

\end{subequations}
It is important to recognise that Proposition \ref{lemma:nocollectivein} is a non-trivial result. It shows that, with AWPR, the postulate IN-CR1 is satisfied. This means, as we mentioned earlier, that AWPR satisfies the ``collective'' version of \emph{conflict-freeness}, a condition that is usually agreed on as a minimal reasonable condition in argument evaluation. This comes ``for free'' as a result of the intrinsic structure of the individual labellings, leading to coordinated votes. Note, however, that the \emph{IN-CR} postulate is not fully satisfied. Although Proposition \ref{lemma:nocollectivein} guarantees that a collectively accepted argument will never have a collectively accepted defeater, it does not guarantee \emph{IN-CR2}, that none of its defeaters will be collectively undecided. This is demonstrated in the following remark.

\begin{remark}\label{rem:comInCR2}
AWPR violates IN-CR2. If an argument is collectively accepted, some of its defeaters might be collectively undecided. 
\begin{proof}
Suppose argument $c$ has two defeaters, $a$ and $b$. Suppose we have $7$ agents, with votes as shown in Figure \ref{fig:counterExampleLemma1}. Clearly, while $c$ is collectively accepted because $[M(\calL)](c)= \myin$, one of its defeaters is not collectively rejected because $[M(\calL)](b)= \myundec$.
\begin{figure}[ht]
  \centering
  \includegraphics[scale=1]{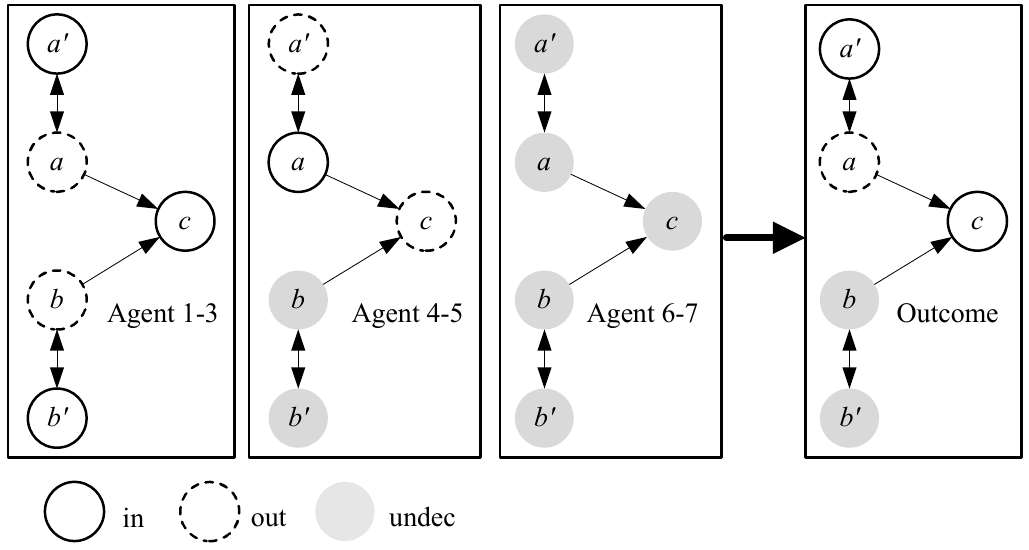}
  \caption{Seven votes collectively accepting $c$, without collectively rejecting $b$}\label{fig:counterExampleLemma1}
\end{figure}
\end{proof}
\end{remark}

As we saw earlier in Example \ref{example:maincounter}, \emph{OUT-CR} is violated by AWPR.

\begin{remark}\label{rem:cdef}
AWPR violates \emph{OUT-CR}. If an argument is collectively rejected, it is not guaranteed that one of its defeaters will be collectively accepted.
\begin{proof}
See Example \ref{example:maincounter} for a counterexample.
\end{proof}
\end{remark}

The following remark shows that there are no intrinsic guarantees for satisfying \emph{UNDEC-CR1}.
\begin{remark}\label{rem:comUndCR1}
AWPR violates \emph{UNDEC-CR1}. If an argument is collectively undecided, it is possible that one of its defeaters will be collectively accepted.
\begin{proof}
Suppose argument $c$ has two defeaters, $a$ and $b$. Suppose we have $7$ agents. Suppose the votes are as shown in Figure \ref{fig:counterExampleUndec2}. We have $[M(\calL)](c)= \myundec$ with $4$ votes, but we have $[M(\calL)](a)= \myin$ with $3$ votes, thus violating the postulate.
\begin{figure}[ht]
  \centering
  \includegraphics[scale=1]{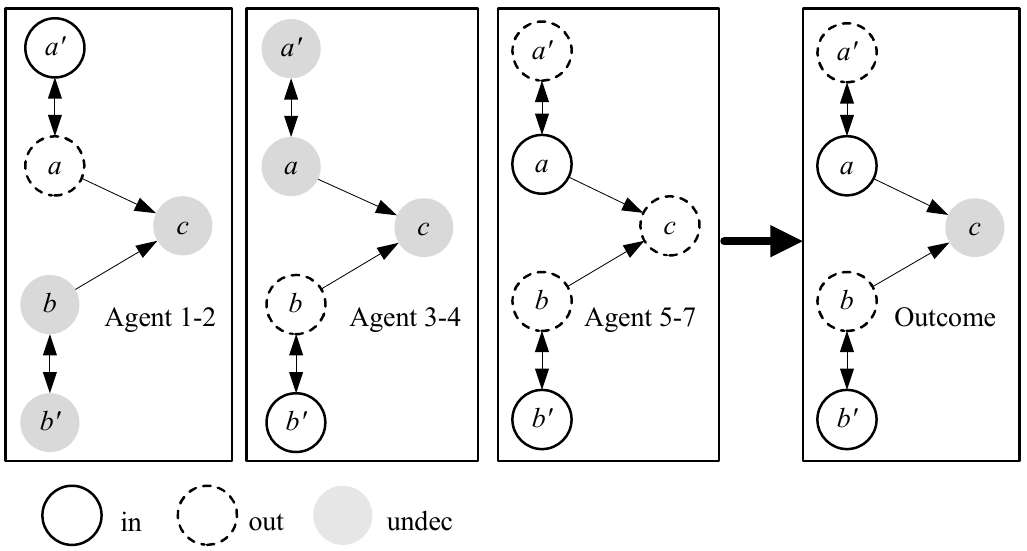}
  \caption{Seven agents collectively undecided on $c$, but collective accepting $a$}\label{fig:counterExampleUndec2}
\end{figure}
\end{proof}
\end{remark}

Similarly, the remark below shows that \emph{UNDEC-CR2} is not intrinsically guaranteed.
\begin{remark}\label{rem:comUndCR2}
AWPR violates \emph{UNDEC-CR2}. If an argument is collectively undecided, it is possible that none of its defeaters will be collectively undecided.
\begin{proof}
Suppose argument $c$ has two defeaters, $a$ and $b$. Suppose we have $3$ agents, with votes as shown in Figure \ref{fig:counterExampleUndec}. Clearly, we have $[M(\calL)](c)= \myundec$, but we have $[M(\calL)](a)= \myout$ and $[M(\calL)](b)= \myout$, which would have required $c$ to be $\myin$.
\begin{figure}[ht]
  \centering
  \includegraphics[scale=1]{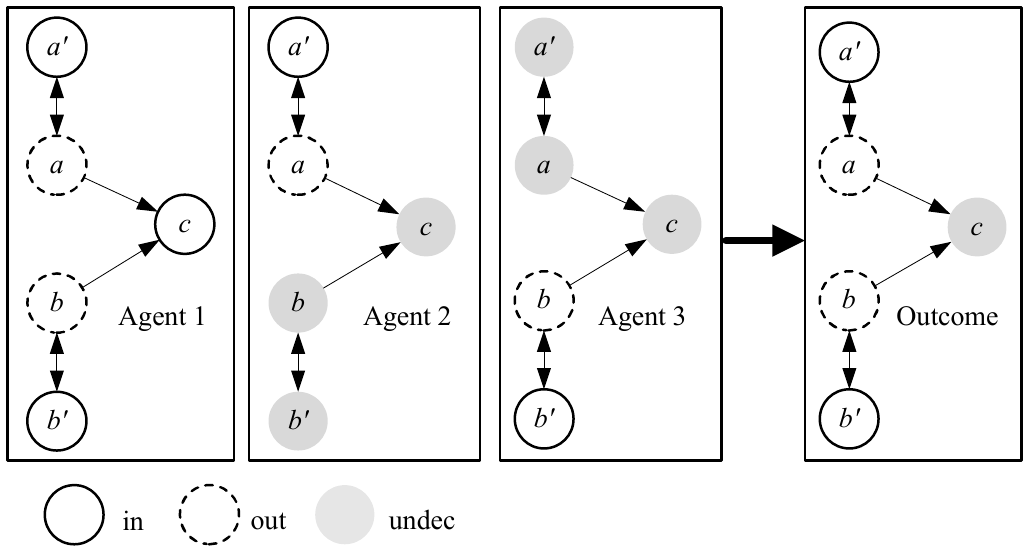}
  \caption{Three votes collectively undecided about $c$, but not collectively undecided about any of its defeaters $a$ or $b$.}\label{fig:counterExampleUndec}
\end{figure}
\end{proof}
\end{remark}

\subsection{Other Classical Semantics}\label{subsection.othersem}
As we noted before, each possible \emph{complete} labelling represents a valid self-defending viewpoint, therefore restricting votes to \emph{complete} labellings is akin to requiring that each vote in judgment aggregation is consistent, or that each preference in preference aggregation is transitive and complete. Other classical semantics are essentially restrictions (i.e. sub-cases) of complete semantics. For example, restricting votes to \emph{preferred} semantics requires each individual to be more committed, maximizing (w.r.t. set-inclusion) the set of accepted (or the set of rejected) arguments, while restricting votes to \emph{semi-stable} semantics requires each individual to be less conservative, minimizing (w.r.t. set-inclusion) the set of arguments about which they are undecided. It is not clear, a priori, what such requirements, applied on the individual, would have on the collective rationality of the outcome of voting.

In this subsection, we provide an analysis for the \emph{grounded}, \emph{stable}, \emph{semi-stable}, and \emph{preferred} semantics as more restricted forms of labellings to choose from. Note that the definition of \emph{Universal Domain}, introduced earlier using \emph{complete} semantics, is now redefined with respect to these semantics, while the definition of \emph{Collective Rationality} is unchanged.  

The following proposition looks trivial but, as we will see, it is the most positive result in this subsection.

\begin{proposition}\label{prop.gro}
If for every argument, agents can only vote for the \emph{grounded} labelling, then $M$ satisfies \emph{IN-CR1}, \emph{IN-CR2}, \emph{OUT-CR}, \emph{UNDEC-CR1} and \emph{UNDEC-CR2}. Equivalently, $M$ satisfies \emph{Collective Rationality}.
\begin{proof}
Trivial since there always exists one grounded labeling \cite{dung:1995,caminada:2006}, and $M$ satisfies \emph{Unanimity}.

\end{proof}
\end{proposition}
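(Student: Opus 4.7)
The plan is to exploit two facts that are already established earlier in the paper: (i) the grounded labelling of any argumentation framework is unique (stated just after the definition of the other classical semantics), and (ii) AWPR satisfies \emph{Unanimity} (Corollary \ref{cor.postu}). Together these make the statement essentially immediate, and I expect no real obstacle to appear.

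Concretely, first I would observe that if every agent's labelling must belong to $\grnd{\AF}$, and $|\grnd{\AF}|=1$, then every admissible profile has the form $\calL=(L^G,\ldots,L^G)$ where $L^G$ is the unique grounded labelling of $\AF$. Next I would apply \emph{Unanimity}, which AWPR satisfies by Corollary \ref{cor.postu}, to conclude that $M(\calL)=L^G$. Since $\grnd{\AF}\subseteq\comp{\AF}$, the output $L^G$ is a complete labelling, so it satisfies the three conditions of Definition \ref{definition:CompLabelling} pointwise. Reading those conditions off the output, this is exactly \emph{IN-CR1}, \emph{IN-CR2}, \emph{OUT-CR}, \emph{UNDEC-CR1} and \emph{UNDEC-CR2}, so by Proposition \ref{prop:Condorcet} we obtain \emph{Collective Rationality}.

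The only thing worth double-checking is the tie-freeness issue: since $M$ is only partial, one might worry that a unanimous profile could still be outside its domain. However a profile in which every coordinate is the same labelling trivially causes no ties on any argument, so $M(\calL)$ is defined and \emph{Unanimity} applies as stated. Hence there is nothing more to verify, and the argument can be written in a couple of lines.
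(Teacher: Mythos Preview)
Your proposal is correct and follows exactly the paper's own argument: uniqueness of the grounded labelling together with \emph{Unanimity} of $M$ (Corollary~\ref{cor.postu}) forces $M(\calL)=L^G\in\comp{\AF}$, from which all five sub-postulates and hence \emph{Collective Rationality} follow. Your extra remark on tie-freeness is a sound sanity check but not needed beyond what the paper already states.
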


%
As a corollary of Proposition \ref{lemma:nocollectivein}, when agents votes are restricted to \emph{stable} (respectively \emph{semi-stable} or \emph{preferred}) labellings, AWPR satisfies IN-CR1.
\begin{corollary}\label{cor.InCR1}
When agents can only vote for \emph{stable} (respectively \emph{semi-stable} or \emph{preferred}) labellings, AWPR satisfies IN-CR1
\begin{proof}
From Proposition \ref{lemma:nocollectivein}, if agents can only vote for \emph{complete} labellings, then AWPR satisfies IN-CR1. Since every \emph{stable} (respectively \emph{semi-stable} or \emph{preferred}) labelling is a complete labelling, then when agents votes are restricted to these semantics, AWPR satisfies IN-CR1. 
\end{proof}
\end{corollary}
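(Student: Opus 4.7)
The plan is essentially to reduce the corollary to Proposition \ref{lemma:nocollectivein} by invoking the chain of inclusions already noted in the background section, namely $\stab{\AF} \subseteq \semi{\AF} \subseteq \pref{\AF} \subseteq \comp{\AF}$. The point is that Proposition \ref{lemma:nocollectivein} establishes \emph{IN-CR1} for any profile $\calL = (L_1, \ldots, L_n) \in \comp{\AF}^n$, and its proof nowhere uses extra structure beyond each $L_i$ being a complete labelling. Hence any result that restricts the domain to a strict subset of $\comp{\AF}$ inherits the conclusion automatically.

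Concretely, I would argue as follows. Fix any of the three semantics $\calS \in \{\stab{\AF}, \semi{\AF}, \pref{\AF}\}$ and suppose the domain of $M$ is restricted to $\calS^n$. Take any profile $\calL = (L_1, \ldots, L_n) \in \calS^n$ such that $M(\calL)$ is defined. Since $\calS \subseteq \comp{\AF}$, we have $\calL \in \comp{\AF}^n$. Therefore Proposition \ref{lemma:nocollectivein} applies verbatim to $\calL$, yielding that whenever $[M(\calL)](a) = \myin$ for some $a \in \calA$, no defeater $b$ of $a$ satisfies $[M(\calL)](b) = \myin$. This is precisely \emph{IN-CR1}.

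No step here is really an obstacle: the entire content of the corollary is the monotonicity of the \emph{IN-CR1} postulate under domain restriction. The only thing to be careful about is phrasing — I would make it clear that restricting the admissible inputs can only shrink the set of profiles on which one must verify the postulate, so any universally quantified property of $M$ over $\comp{\AF}^n$ passes down to $\stab{\AF}^n$, $\semi{\AF}^n$, and $\pref{\AF}^n$ without further work. Hence the corollary is established by a one-line appeal to Proposition \ref{lemma:nocollectivein} together with the semantic inclusions.
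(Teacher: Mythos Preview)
Your proposal is correct and follows essentially the same approach as the paper: both appeal to Proposition~\ref{lemma:nocollectivein} and the inclusions $\stab{\AF} \subseteq \semi{\AF} \subseteq \pref{\AF} \subseteq \comp{\AF}$ to conclude that \emph{IN-CR1}, once established for complete labellings, is inherited under domain restriction. Your version is simply a more explicit unpacking of the same one-line argument.
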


\begin{lemma}\label{lem.staInCR2}
When agents can only vote for a \emph{stable} labelling, AWPR satisfies \emph{IN-CR2}. If an argument is collectively accepted, none of its defeaters is collectively undecided.
\begin{proof}
Suppose, towards a contradiction, that there exists an argument that is collectively accepted and one of its defeaters is collectively undecided. Then, by \emph{Supportiveness}, there exists one submitted labelling (by some agent) in which this argument is undecided. However, agents are only allowed to submit a \emph{stable} labelling, and \emph{stable} labellings have no argument labelled undecided. Contradiction.
\end{proof}
\end{lemma}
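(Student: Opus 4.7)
The plan is to prove this by contradiction, relying on two previously established facts: (i) AWPR satisfies \emph{Supportiveness} (Proposition \ref{theorem:1to7}), which guarantees that the collective label assigned to any argument must coincide with the label submitted by at least one agent; and (ii) stable labellings satisfy $\myundec(L) = \emptyset$ by definition, i.e.\ no argument is ever labelled $\myundec$ in a stable labelling.

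First I would assume, towards contradiction, that there exists a profile $\calL = (L_1,\ldots,L_n)$ of stable labellings such that for some argument $a \in \calA$ we have $[M(\calL)](a) = \myin$, and yet some defeater $b$ of $a$ satisfies $[M(\calL)](b) = \myundec$. Applying \emph{Supportiveness} to argument $b$, there must exist some agent $i \in \Ag$ with $L_i(b) = \myundec$. But $L_i$ is a stable labelling by the domain restriction, so $\myundec(L_i) = \emptyset$, contradicting $L_i(b) = \myundec$.

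The hard part will be\ldots recognising that there is no hard part: once \emph{Supportiveness} is in hand, the argument is immediate and does not even require the assumption that $a$ is collectively accepted. In fact, the same reasoning shows the strictly stronger fact that under the stable-semantics restriction, AWPR never labels any argument $\myundec$ collectively. The interesting conceptual takeaway is that the property IN-CR2, which was shown to fail under the complete-semantics restriction (see Remark \ref{rem:comInCR2}), is recovered here purely because stable labellings eliminate $\myundec$ at the individual level, and \emph{Supportiveness} lifts this elimination to the collective level.
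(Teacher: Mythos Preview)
Your proof is correct and matches the paper's own argument essentially step for step: assume a defeater is collectively $\myundec$, invoke \emph{Supportiveness} to obtain an individual stable labelling assigning $\myundec$, and derive a contradiction from $\myundec(L_i)=\emptyset$. Your additional observation that the hypothesis $[M(\calL)](a)=\myin$ is superfluous is also correct, and indeed the paper exploits the same underlying fact again in Lemma~\ref{lem.staUndCR1}.
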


\begin{remark}\label{rem:staOutCR}
When agents can only vote for \emph{stable} (respectively \emph{semi-stable} or \emph{preferred}) labellings, AWPR violates \emph{OUT-CR}. If an argument is collectively rejected, it is possible that none of its defeaters is collectively accepted.
\begin{proof}
See Example \ref{example:maincounter} for a counterexample. 
\end{proof}
\end{remark}


\begin{lemma}\label{lem.staUndCR1}
When agents can only vote for a \emph{stable} labelling, AWPR satisfies \emph{UNDEC-CR} (i.e. it satisfies both \emph{UNDEC-CR1} and \emph{UNDEC-CR2}). If an argument is collectively undecided, none of its defeaters is collectively accepted, and at least one of its defeaters is collectively undecided.
\begin{proof}
Since in \emph{stable} labelling no argument is labelled undecided, by \emph{Supportiveness}, there is no argument that is collectively undecided. Then, this lemma holds.
\end{proof}
\end{lemma}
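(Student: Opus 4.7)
The plan is to exploit the fact that the \emph{stable} semantics, by definition, forbids the label $\myundec$: for any stable labelling $L$, we have $\myundec(L) = \emptyset$, so every argument is assigned either $\myin$ or $\myout$. This makes the antecedents of both UNDEC-CR1 and UNDEC-CR2 impossible to trigger, so the obligations become vacuous.

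More concretely, first I would fix an arbitrary profile $\calL = (L_1, \ldots, L_n)$ in the restricted domain, where each $L_i$ is a stable labelling. By the definition of stable labelling, $L_i(a) \neq \myundec$ for every $i \in \Ag$ and every $a \in \calA$. Next, I would invoke Proposition \ref{theorem:1to7}, which tells us that AWPR satisfies \emph{Supportiveness}: for every $a \in \calA$, we must have $[M(\calL)](a) = L_i(a)$ for some $i \in \Ag$. Combining these two facts yields that $[M(\calL)](a) \in \{\myin, \myout\}$ for every $a \in \calA$, i.e.\ no argument can be collectively undecided.

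With this established, both UNDEC-CR1 and UNDEC-CR2 hold vacuously, since their common hypothesis ``$[M(\calL)](a) = \myundec$'' is never true under stable-restricted voting. Hence the UNDEC-CR postulate as a whole is satisfied.

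There is really no serious obstacle here; the only care needed is to explicitly cite that AWPR satisfies \emph{Supportiveness} (from Proposition \ref{theorem:1to7}), because without that step one might worry that the aggregation rule could introduce a $\myundec$ label even when no individual votes for it. Once Supportiveness is in hand, the vacuous-satisfaction argument closes the proof.
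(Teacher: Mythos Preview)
Your proposal is correct and follows essentially the same approach as the paper: both argue that since stable labellings never assign $\myundec$, \emph{Supportiveness} (Proposition~\ref{theorem:1to7}) ensures no argument is collectively $\myundec$, so UNDEC-CR holds vacuously. Your version is simply a more detailed spelling-out of the paper's one-line proof.
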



%
%

We continue with the \emph{semi-stable} and \emph{preferred} semantics.

\begin{remark}\label{rem:semInCR2}
When agents can only vote for a \emph{semi-stable} (respectively \emph{preferred}) labelling, AWPR violates \emph{IN-CR2}. If an argument is collectively accepted, it is possible that one of its defeaters is collectively undecided.
\begin{proof}
Suppose argument $c_4$ has two defeaters, $a_4$ and $c_6$. Suppose we have $7$ agents, with votes as shown in Figure \ref{fig:SemiUNDCR1}. Clearly, while $c_4$ is collectively accepted because $[M(\calL)](c_4)=\myin$, one of its defeaters, namely $a_4$, is collectively undecided because $[M(\calL)](a_4)=\myundec$.
\end{proof}
\end{remark}

\begin{figure}[ht]
    \centering
  \includegraphics[scale=0.8]{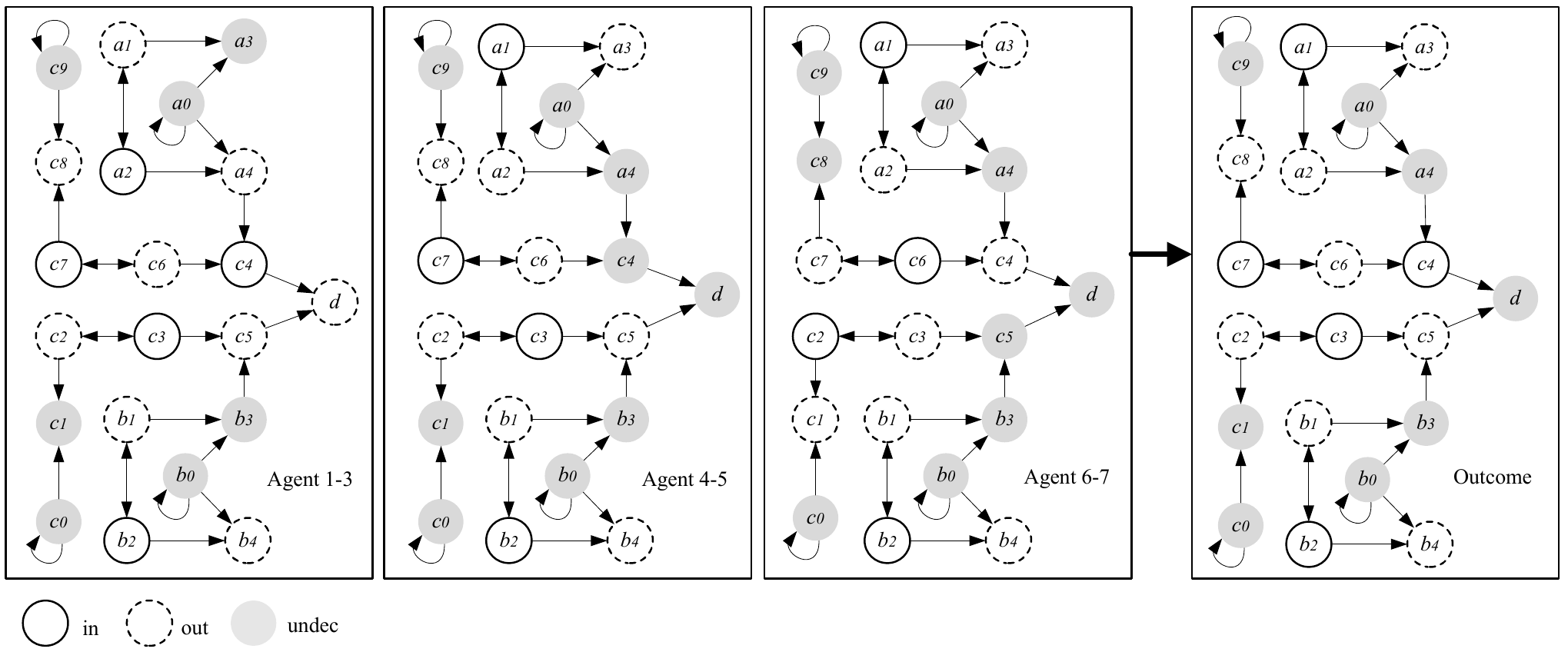}
  \caption{A counterexample shows how, given semi-stable (respectively preferred) semantics, AWPR violates IN-CR2 and UNDEC-CR1.}\label{fig:SemiUNDCR1}
\end{figure}

\begin{remark}\label{rem:semUndCR1}
When agents can only vote for a \emph{semi-stable} (respectively \emph{preferred}) labelling, AWPR violates \emph{UNDEC-CR1}. If an argument is collectively undecided, it is possible that one of its defeaters is collectively accepted.
\begin{proof}
Suppose argument $d$ has two defeaters, $c_4$ and $c_5$. Suppose we have $7$ agents, with votes as shown in Figure \ref{fig:SemiUNDCR1}. Clearly, while $d$ is collectively undecided because $[M(\calL)](d)=\myundec$, one of its defeaters, namely $c_4$, is collectively accepted because $[M(\calL)](c_4)=\myin$.
\end{proof}
\end{remark}

\begin{remark}\label{rem:semUndCR2}
When agents can only vote for a \emph{semi-stable} (respectively \emph{preferred}) labelling, AWPR violates \emph{UNDEC-CR2}. If an argument is collectively undecided, it is possible that none of its defeaters is collectively undecided.

\begin{proof}
Suppose argument $c$ has two defeaters, $a_4$ and $b_3$. Suppose we have $3$ agents, with votes as shown in Figure \ref{fig:SemiWellBeh}. Clearly, while $c$ is collectively undecided because $[M(\calL)](c)= \myundec$, none of its defeaters is collectively undecided.
\end{proof}
\end{remark}

\begin{figure}[ht]
    \centering
  \includegraphics[scale=0.8]{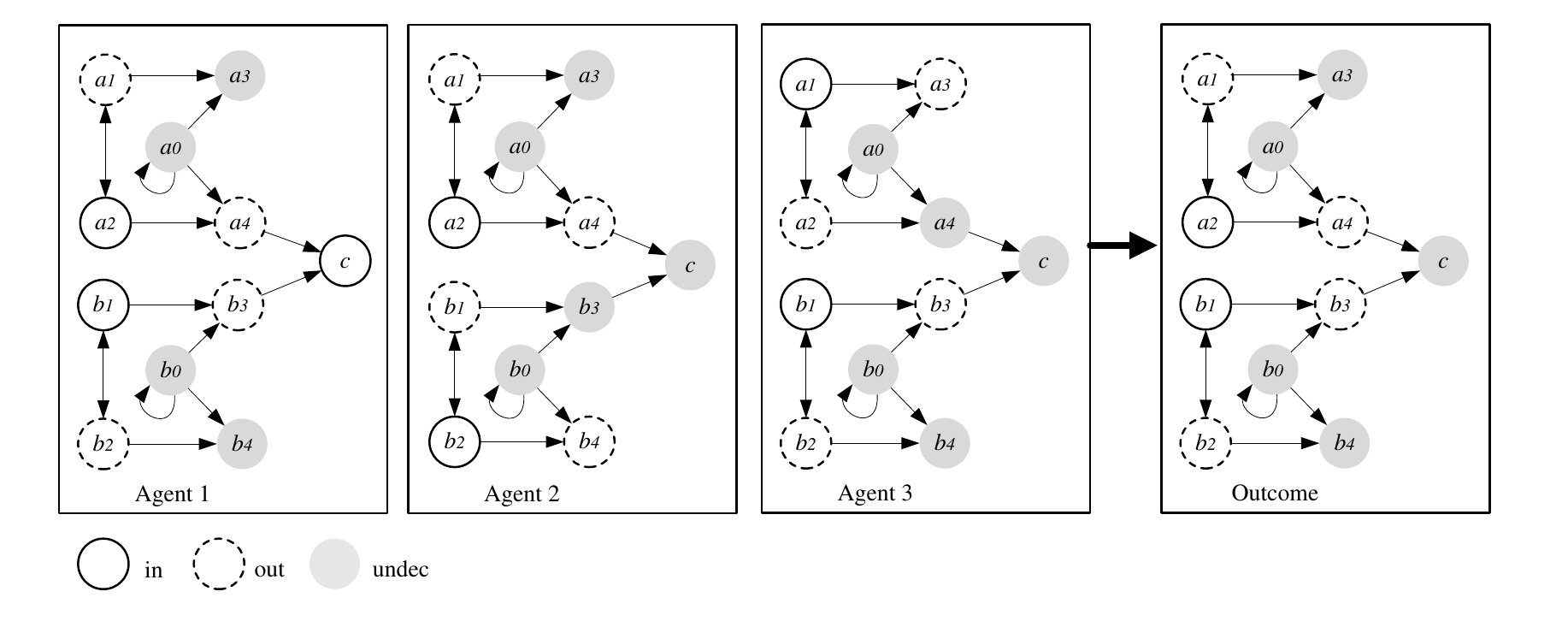}
  \caption{A counterexample shows how, given semi-stable (respectively preferred) semantics, AWPR violates UNDEC-CR2.}\label{fig:SemiWellBeh}
\end{figure}

%
%

To sum up, the only restriction that would satisfy the \emph{Collective Rationality} is the \emph{grounded} semantics (Proposition \ref{prop.gro}). This is trivially true because only one \emph{grounded} labelling exists. However, \emph{stable} semantics violates \emph{Collective Rationality} only because it violates \emph{OUT-CR}. As for the \emph{semi-stable} and \emph{preferred} semantics, they only satisfy \emph{IN-CR1}, a property they inherit from the \emph{complete} semantics. Refer to Table \ref{tab:semSum} for a summary of the results we have found.

\begin{table}[htbp]
  \centering
  \footnotesize
   \begin{tabular}{|c|c|c||c||c|c|}
    \hline
   \multirow{2}{*}{\textbf{Semantics}}  & \multicolumn{2}{c||}{\textbf{IN-CR}} & \multirow{2}{*}{\textbf{OUT-CR}} & \multicolumn{2}{c|}{\textbf{UNDEC-CR}} \\
  
    \cline{2-3}\cline{5-6}
                                        & \textbf{IN-CR1}                      & \textbf{IN-CR2}&                    &                                 \textbf{UND-CR1} & \textbf{UND-CR2} \\
   
    \hline

   \multirow{2}{*}{\textbf{Grounded}} &Yes  & Yes                  &Yes                   &Yes                   &Yes\\
                      &(Prop. \ref{prop.gro})&(Prop. \ref{prop.gro})&(Prop. \ref{prop.gro})&(Prop. \ref{prop.gro})&(Prop. \ref{prop.gro})\\
    \hline
   
   \multirow{2}{*}{\textbf{Stable}}& Yes                     & Yes                     & No                        & Yes & Yes \\
                &(Cor. \ref{cor.InCR1})&(Lem. \ref{lem.staInCR2})&(Rem. \ref{rem:staOutCR})& (Lem. \ref{lem.staUndCR1})&(Lem. \ref{lem.staUndCR1})\\

   \hline
 
   \multirow{2}{*}{\textbf{Semi-stable}}& Yes & No               & No                      & No                       & No \\
                &(Cor. \ref{cor.InCR1})&(Rem. \ref{rem:semInCR2})&(Rem. \ref{rem:staOutCR})&(Rem. \ref{rem:semUndCR1})&(Rem. \ref{rem:semUndCR2})\\
   \hline
   \multirow{2}{*}{\textbf{Preferred}}& Yes & No               & No                      & No                       & No \\
                &(Cor. \ref{cor.InCR1})&(Rem. \ref{rem:semInCR2})&(Rem. \ref{rem:staOutCR})&(Rem. \ref{rem:semUndCR1})&(Rem. \ref{rem:semUndCR2})\\
   \hline
   \multirow{2}{*}{\textbf{Complete}} &Yes & No                      & No                  & No                       & No\\
          &(Prop. \ref{lemma:nocollectivein})&(Rem. \ref{rem:comInCR2})&(Rem. \ref{rem:cdef})&(Rem. \ref{rem:comUndCR1})&(Rem. \ref{rem:comUndCR2})\\
    \hline

    \end{tabular}%
  \caption{The \emph{Collective Rationality} properties that are satisfied/violated by AWPR given different semantics.}\label{tab:semSum}
\end{table}%

  

\section{Restricting the Domain of Argumentation Graphs to Satisfy Collective Rationality}\label{section:graph}
In an earlier section, we showed that, AWPR violates \emph{Universal Domain} and \emph{Collective Rationality}. In this section, we investigate whether AWPR can satisfy \emph{Collective Rationality} by restricting the argumentation framework to graphs with certain graph-theoretical properties. We show that graphs consisting of disconnected issues (a notion we define below) and graphs in which arguments have limited defeaters (in some sense) guarantee collectively rational outcomes when the AWPR is used.

\subsection{Disconnected Issues}
The notion of ``issue'' was defined in \cite{booth2012quantifying} in order to quantify disagreement between graph labellings. In this section, we use this notion to provide a possibility result.

Crucial to the definition of the ``issue'' is the concept of ``in-sync''. Two arguments $a$ and $b$ are said to be \emph{in-sync} if the (\emph{complete}) label of one cannot be changed without causing a change of equal magnitude to the label of the other.

\begin{definition}[in-Sync $\equiv$ \cite{booth2012quantifying}]
Let $\comp{\AF}$ be the set of all complete labellings for argumentation framework $\AF=\langle \calA, \rightharpoonup \rangle$. We say that two arguments $a, b \in \calA$ are in-sync ($a \equiv b$):
\begin{equation}
a \equiv b \text{ iff } (a \equiv_1 b \; \vee \; a \equiv_2 b)
\end{equation}
where:
\begin{itemize}
\item $a \equiv_1 b$ iff $\forall L \in \comp{\AF}:$ $L(a)=L(b)$.
\item $a \equiv_2 b$ iff $\forall L \in \comp{\AF}:$ $(L(a)=\myin$ $\Leftrightarrow$ $L(b)=\myout)$ $\wedge$ $(L(a)=\myout$ $\Leftrightarrow$ $L(b)=\myin)$
\end{itemize}
\end{definition}

This relation forms an equivalence relation over the arguments, and the equivalence classes are called ``issues''. 

\begin{definition}[Issue \cite{booth2012quantifying}]
Given the argumentation framework $\AF=\langle \calA, \rightharpoonup \rangle$, a set of arguments $\calB \subseteq \calA$ is called an issue iff
it forms an equivalence class of the relation \emph{in-Sync ($\equiv$)}.
\end{definition}

For example, in Figure \ref{fig:issueEx}, the graph consists of three issues, namely $\{a_1\}$, $\{a_2,a_3\}$, and $\{a_4,a_5\}$.

\begin{figure}[ht]
    \centering
  \includegraphics[scale=1.2]{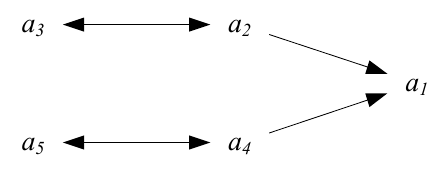}
  \caption{An example about issues.}\label{fig:issueEx}
\end{figure}

The following lemma is crucial in showing the main result of this subsection. We show that if the defeaters of an argument belong to the same issue as the argument, then the collective labelling of this argument chosen by AWPR is always a legal labelling.

\begin{subequations}
\begin{lemma}\label{lem:issue}
Let $\AF=\langle \calA, \rightharpoonup \rangle$ be an argumentation framework. Let $a \in \calA$ be an argument in this framework. If every defeater of $a$ (call it $b$) belongs to the same issue of $a$ (i.e. $\forall b \in a^-$: $b \equiv a$), then AWPR would always produce a legal collective labelling for argument $a$.
\begin{proof}
Let $b_1,\ldots,b_m \in \calA$ such that $b_j \in a^-$ and $a \equiv b_j$ $\forall j=1,\ldots,m$. Then, for every complete labelling $L$:
\begin{equation}\label{eq.le.issue.1.out}
L(a)=\myout \Leftrightarrow L(b_1)=\myin \Leftrightarrow \ldots \Leftrightarrow L(b_m)=\myin
\end{equation} 
\begin{equation}\label{eq.le.issue.1.in}
L(a)=\myin \Leftrightarrow L(b_1)=\myout \Leftrightarrow \ldots \Leftrightarrow L(b_m)=\myout
\end{equation} 
\begin{equation}\label{eq.le.issue.1.undec}
L(a)=\myundec \Leftrightarrow L(b_1)=\myundec \Leftrightarrow \ldots \Leftrightarrow L(b_m)=\myundec
\end{equation} 

From Equations \ref{eq.le.issue.1.out}, \ref{eq.le.issue.1.in}, and \ref{eq.le.issue.1.undec}, for every labelling profile $\calL = (L_1,\ldots,L_n)$:
\begin{equation}\label{eq.le.issue.2.out}
|\{i:L_i(a)=\myout\}| = |\{i:L_i(b_1)=\myin\}| = \ldots = |\{i:L_i(b_m)=\myin\}|
\end{equation} 
\begin{equation}\label{eq.le.issue.2.in}
|\{i:L_i(a)=\myin\}| = |\{i:L_i(b_1)=\myout\}| = \ldots = |\{i:L_i(b_m)=\myout\}|
\end{equation} 
\begin{equation}\label{eq.le.issue.2.undec}
\begin{aligned}[b]
|\{i:L_i(a)=\myundec\}| = |\{i:L_i(b_1)=\myundec\}| = \ldots = |\{i:L_i(b_m)=\myundec\}|
\end{aligned}
\end{equation} 

From Equations \ref{eq.le.issue.2.out}, \ref{eq.le.issue.2.in}, and \ref{eq.le.issue.2.undec}:
\begin{equation}\label{eq.le.issue.3.out}
[M(\calL)](a)=\myout \Leftrightarrow [M(\calL)](b_1)=\myin \Leftrightarrow \ldots \Leftrightarrow [M(\calL)](b_m)=\myin
\end{equation} 
\begin{equation}\label{eq.le.issue.3.in}
[M(\calL)](a)=\myin \Leftrightarrow [M(\calL)](b_1)=\myout \Leftrightarrow \ldots \Leftrightarrow [M(\calL)](b_m)=\myout
\end{equation}
\begin{equation}\label{eq.le.issue.3.undec}
[M(\calL)](a)=\myundec \Leftrightarrow [M(\calL)](b_1)=\myundec \Leftrightarrow \ldots \Leftrightarrow [M(\calL)](b_m)=\myundec
\end{equation}

From Equations \ref{eq.le.issue.3.out}, \ref{eq.le.issue.3.in}, and \ref{eq.le.issue.3.undec}, AWPR satisfies \emph{IN-CR}, \emph{OUT-CR}, and \emph{UNDEC-CR} with respect to $a$ in this case. Then, $a$ is always legally collectively labelled by AWPR if every defeater of it is in the same issue as $a$. 
\end{proof}
\end{lemma}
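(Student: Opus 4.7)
\bigskip

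\noindent\textbf{Proof proposal.} The plan is to show that, under the hypothesis, every agent's complete labelling assigns labels to $a$ and to each of its defeaters in a rigidly coordinated way, and that this coordination is preserved under majority counting, so that the AWPR output at $a$ automatically satisfies \emph{IN-CR}, \emph{OUT-CR} and \emph{UNDEC-CR}.

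First I would unpack the hypothesis. Enumerate $a^- = \{b_1,\ldots,b_m\}$ and note that $a \equiv b_j$ for each $j$. I would argue that, because $b_j$ actually defeats $a$, for any complete $L$ we have the three biconditionals
\[
L(a)=\myin \Leftrightarrow L(b_j)=\myout, \quad L(a)=\myout \Leftrightarrow L(b_j)=\myin, \quad L(a)=\myundec \Leftrightarrow L(b_j)=\myundec.
\]
For the pairs $(a,b_j)$ related by $\equiv_2$ this is just the definition. For pairs related only by $\equiv_1$, the defeat relation forces $L(a)=L(b_j)$ never to equal $\myin$ (and, by propagating the complete-labelling conditions, also never to equal $\myout$), so both always equal $\myundec$ and the three biconditionals hold vacuously. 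This is the step I expect to be the most delicate, since $\equiv$ is the union of $\equiv_1$ and $\equiv_2$ and the degenerate $\equiv_1$ sub-case has to be ruled out or absorbed rather than ignored.

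Next I would lift these biconditionals to the vote-count level. Since every $L_i$ in a profile $\calL=(L_1,\ldots,L_n)$ is a complete labelling, the biconditionals applied to each agent separately give, for every $b_j \in a^-$,
\begin{align*}
|\{i : L_i(a) = \myin\}| &= |\{i : L_i(b_j) = \myout\}|,\\
|\{i : L_i(a) = \myout\}| &= |\{i : L_i(b_j) = \myin\}|,\\
|\{i : L_i(a) = \myundec\}| &= |\{i : L_i(b_j) = \myundec\}|.
\end{align*}
Thus the label that achieves a strict plurality for $a$ corresponds, under the $\myin\leftrightarrow\myout$ swap (with $\myundec$ fixed), to the label that achieves a strict plurality for each $b_j$. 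In other words, $[M(\calL)](a)=\myin$ forces $[M(\calL)](b_j)=\myout$ for all $j$; $[M(\calL)](a)=\myout$ forces $[M(\calL)](b_j)=\myin$ for all $j$; and $[M(\calL)](a)=\myundec$ forces $[M(\calL)](b_j)=\myundec$ for all $j$.

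Finally I would read off the three collective-rationality conditions at $a$ directly from these three implications. The first implication yields \emph{IN-CR1} and \emph{IN-CR2} at $a$ (no defeater of $a$ is collectively $\myin$ or $\myundec$). The second yields \emph{OUT-CR} at $a$ (some defeater is collectively $\myin$; in fact all are). The third yields \emph{UNDEC-CR1} and \emph{UNDEC-CR2} at $a$ (no defeater is collectively $\myin$, and every defeater is collectively $\myundec$, so certainly at least one is). Hence the collective label assigned by $M$ to $a$ is legal, completing the proof.
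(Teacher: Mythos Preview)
Your proposal is correct and follows exactly the paper's three-step route: establish the $\myin\!\leftrightarrow\!\myout$, $\myundec\!\leftrightarrow\!\myundec$ biconditionals between $a$ and each $b_j$ at the level of individual complete labellings, lift them to equalities of vote counts, transfer to the plurality outcome, and read off \emph{IN-CR}, \emph{OUT-CR}, \emph{UNDEC-CR} at $a$. You are in fact more careful than the paper, which simply asserts the biconditionals without distinguishing the $\equiv_1$ and $\equiv_2$ sub-cases; your flagging of the $\equiv_1$ sub-case as the delicate step is well placed, since the paper leaves that justification implicit.
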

\end{subequations}

Given the previous lemma, we show that if the argumentation framework consists of a set of disconnected issues, then AWPR satisfies \emph{Collective Rationality} for this framework.

\begin{theorem}\label{thm.issue}
For every $\AF=\langle \calA, \rightharpoonup \rangle$ that consists of a set of disconnected components (i.e. disconnected subgraphs), each of which forms an issue, the argument-wise plurality rule would always produce collectively rational outcomes. 
\begin{proof}
Since $\AF$ consists of a set of disconnected issues, then $\forall a \in \calA$, $a$ has the following property: $\forall b \in \calA$ such that $b \in a^-$ then $b \equiv a$. From Lemma \ref{lem:issue}, $a$ is always legally collectively labelled by AWPR. Then AWPR satisfies \emph{Collective Rationality} for this $\AF$.
\end{proof}
\end{theorem}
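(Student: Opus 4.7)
The plan is to derive Theorem \ref{thm.issue} as a direct consequence of Lemma \ref{lem:issue}. The key observation is that the hypothesis of the theorem (the graph decomposes into disconnected components, each forming a single issue) is strong enough to guarantee that for \emph{every} argument $a \in \calA$, all defeaters of $a$ lie in the same issue as $a$ itself, which is precisely the hypothesis of Lemma \ref{lem:issue}.

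First, I would fix an arbitrary argument $a \in \calA$ and an arbitrary defeater $b \in a^-$. Since the defeat relation $b \rightharpoonup a$ induces an edge between $a$ and $b$ in the underlying graph of $\AF$, the arguments $a$ and $b$ must lie in the same connected component. By the assumption that each connected component forms an issue (i.e.\ a single equivalence class of $\equiv$), we obtain $b \equiv a$. Since $b$ was an arbitrary element of $a^-$, we conclude $\forall b \in a^-: b \equiv a$, which is exactly the hypothesis of Lemma \ref{lem:issue}.

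Next, I would apply Lemma \ref{lem:issue} to obtain that, for every profile $\calL$ in the domain of $M$, the collective label $[M(\calL)](a)$ respects \emph{IN-CR}, \emph{OUT-CR} and \emph{UNDEC-CR} at $a$. Since this holds uniformly for every $a \in \calA$, Proposition \ref{prop:Condorcet} then delivers that $M$ satisfies \emph{Collective Rationality} on this restricted domain of argumentation frameworks, completing the proof.

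There is no real obstacle here: the work has already been done in Lemma \ref{lem:issue}, and the role of the theorem is simply to package the lemma into a graph-theoretic sufficient condition. The only thing to be careful about is making explicit the elementary graph-theoretic step that a defeater of $a$ must live in the same connected component as $a$, and hence (by assumption) in the same issue; everything else follows by immediate appeal to the lemma and Proposition \ref{prop:Condorcet}.
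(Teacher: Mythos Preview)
Your proposal is correct and follows essentially the same approach as the paper's own proof: show that every defeater of an arbitrary argument lies in the same issue, invoke Lemma \ref{lem:issue}, and conclude \emph{Collective Rationality}. Your version is slightly more explicit about the graph-theoretic step and the appeal to Proposition \ref{prop:Condorcet}, but the argument is the same.
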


This result shows that under argumentation frameworks that consist of disconnected issues, AWPR always satisfies \emph{collective rationality}. Indeed, as long as all arguments in every connected component are ``in-sync'', the labelling of one argument fully specifies the labelling of all those connected to it. Then, one can think of these disconnected components/issues as a set of independent propositions, and voting is done issue-wise. 

\subsection{Limited Defeaters}
Now we move to another condition. It simply states that the defeaters of any argument are limited by the flexibility of labelling of these defeaters. To illustrate the latter term, we use a concept called the ``justification status'', which is defined in \cite{wu2010labelling}. Intuitively, the justification status of an argument is the set of possible labellings that this argument can take.

\begin{definition}[Justification Status \cite{wu2010labelling}]
Let $\AF=\langle \calA, \rightharpoonup \rangle$ be an argumentation framework, and $a \in \calA$ some argument. The justification status of $a$ is the outcome yielded by the function $\calJ\calS: \calA \rightarrow 2^{\{\myin,\myout,\myundec\}}$ such that $\calJ\calS(a) = \{L(a)|L \in \comp{\AF}\}$.
\end{definition} 
There are six possible justification statuses. Neither $\emptyset$ nor $\{\myin,\myout\}$ is a possible justification status. The former is because each argumentation framework has at least one \emph{complete} labelling. The later is because of the following theorem.
\begin{theorem}[{\cite[Theorem 2]{wu2010labelling}}]\label{thm.js.io.u}
Let $\AF=\langle \calA, \rightharpoonup \rangle$ be an argumentation framework, and $a \in \calA$ some argument. If $\AF$ has two complete labellings $L_1$ and $L_2$ such that $L_1(a)=\myin$ and $L_2(a)=\myout$, then there exists a labelling $L_3$ such that $L_3(a)=\myundec$.
\end{theorem}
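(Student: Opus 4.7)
The plan is to take $L_3$ to be the (unique) grounded labelling $L^G$ of $\AF$, which by definition is complete, and then to verify $L^G(a) = \myundec$ by ruling out the other two possibilities. Since the existence and uniqueness of $L^G$ are already recorded in the paper, and $L^G$ is a complete labelling, $L_3 := L^G$ automatically qualifies as a labelling of the required kind.

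The key fact I would appeal to is the standard result (going back to Dung, and made explicit in the labelling setting by Caminada) that the grounded labelling is the \emph{least} complete labelling in the information order: for every complete labelling $L$ of $\AF$,
\[
\myin(L^G) \subseteq \myin(L) \qquad \text{and} \qquad \myout(L^G) \subseteq \myout(L).
\]
Informally, anything that $L^G$ commits to as $\myin$ (resp.\ $\myout$) must be committed to in the same way by every other complete labelling.

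Given this, the proof is a short case split on the value of $L^G(a)$. If $L^G(a) = \myin$ then $a \in \myin(L^G) \subseteq \myin(L_2)$, contradicting the assumption $L_2(a) = \myout$. If $L^G(a) = \myout$ then $a \in \myout(L^G) \subseteq \myout(L_1)$, contradicting $L_1(a) = \myin$. The only remaining option is $L^G(a) = \myundec$, and so $L_3 := L^G$ witnesses the claim.

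The main obstacle is upgrading ``minimal'' to ``minimum'': the paper characterises $L^G$ only by the fact that $\myin(L^G)$ is \emph{minimal} among complete labellings w.r.t.\ set inclusion, which on its own does not yield the two subset relations above. Two standard routes remove this obstacle. Either one invokes uniqueness of $L^G$ (already stated in the paper) together with the fact that complete labellings form a complete semi-lattice under the information order, so the unique minimal element must in fact be the minimum; or one appeals directly to Dung's characterisation of the grounded extension as the least fixed point of his characteristic function $\calF$, which by Knaster--Tarski is contained in every other fixed point, and hence in $\myin(L)$ for every complete $L$. The inclusion for $\myout$ then follows immediately from the complete-labelling conditions applied to the defeaters of arguments in $\myout(L^G)$. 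Once either of these is in place, the three-way case split above closes the proof.
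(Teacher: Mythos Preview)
Your proof is correct. Note, however, that the paper does not supply its own proof of this statement: it is quoted as \cite[Theorem 2]{wu2010labelling} and used as a black box in the proof of Theorem~\ref{thm.limDef}, so there is nothing in the present paper to compare your argument against. Your approach via the grounded labelling is the standard one and is essentially how the result is established in the cited source.

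One small simplification regarding your ``main obstacle'': since the paper explicitly restricts to \emph{finite} argumentation frameworks, the set of complete labellings is finite (bounded by $3^{|\calA|}$), and in any finite poset a unique minimal element is automatically the least element. Hence the inclusions $\myin(L^G) \subseteq \myin(L)$ and $\myout(L^G) \subseteq \myout(L)$ for every complete $L$ follow directly from the uniqueness of the grounded labelling already recorded in the paper, without needing the semi-lattice structure or Knaster--Tarski. Your more general routes are also valid, of course, and would be needed if finiteness were dropped.
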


The following lemma shows that an argument with one of its defeaters belong to the same issue as long as all the other defeaters of this argument have the justification status of $\{\myout\}$.

\begin{subequations}
\begin{lemma}\label{lem.js.out-issue}
Let $\AF=\langle \calA, \rightharpoonup \rangle$ be an argumentation framework, and $a,b \in \calA$ two arguments such that $b \rightharpoonup a$. If the following holds:
\begin{equation}
\forall c \neq b: (c \rightharpoonup a \Rightarrow \calJ\calS(c)=\{\myout\})
\end{equation}
Then $a$ and $b$ belong to the same issue (i.e. $a \equiv b$). Moreover, $a$ is always legally collectively labelled by AWPR.
\begin{proof}
One can show that:
\begin{equation}\label{eq.le.js.out-issue.out}
L(a)=\myout \Leftrightarrow L(b)=\myin
\end{equation} 
\begin{equation}\label{eq.le.js.out-issue.in}
L(a)=\myin \Leftrightarrow L(b)=\myout
\end{equation} 
\begin{equation}\label{eq.le.js.out-issue.undec}
L(a)=\myundec \Leftrightarrow L(b)=\myundec
\end{equation} 
Hence, $a \equiv b$. 

Moreover, in a similar way to Lemma \ref{lem:issue}, one can show that, for every possible profile $\calL = (L_1,\ldots,L_n)$, the following holds:
\begin{itemize}
\item If $[M(\calL)](a)=\myout$ then $[M(\calL)](b)=\myin$ ($b \in a^-$).
\item If $[M(\calL)](a)=\myin$ then $[M(\calL)](b)=\myout$, and by \emph{Unanimity}, $\forall c \neq b: (c \rightharpoonup a \Rightarrow [M(\calL)](c)=\myout)$.
\item If $[M(\calL)](a)=\myundec$ then $[M(\calL)](b)=\myundec$ ($b \in a^-$), and by \emph{Supportiveness}, $\forall c \neq b: (c \rightharpoonup a \Rightarrow [M(\calL)](c) \neq \myin)$.
\end{itemize}
Hence, $a$ is always legally collectively labelled by AWPR.
\end{proof}
\end{lemma}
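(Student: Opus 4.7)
The plan is to prove the two claims of the lemma in turn: first the in-sync relation $a \equiv b$, and then the collective rationality conclusion for $a$ under AWPR.

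For $a \equiv b$, the key observation is that since $\calJ\calS(c) = \{\myout\}$ for every defeater $c \neq b$ of $a$, every complete labelling $L$ assigns $L(c) = \myout$. Hence in any complete labelling the only defeater of $a$ whose label can vary is $b$, and the complete-labelling conditions of Definition \ref{definition:CompLabelling} force $L(a)$ to be determined entirely by $L(b)$: namely $L(a) = \myout \Leftrightarrow L(b) = \myin$ (since $b$ is the only possible $\myin$ defeater), $L(a) = \myin \Leftrightarrow L(b) = \myout$ (since all other defeaters are already $\myout$), and $L(a) = \myundec \Leftrightarrow L(b) = \myundec$ (since an $\myundec$ defeater of $a$ must be $b$). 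These are precisely the three biconditionals \ref{eq.le.js.out-issue.out}--\ref{eq.le.js.out-issue.undec}, and they give $a \equiv_2 b$, hence $a \equiv b$.

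For the moreover part, I would first note that, by the same $\calJ\calS(c) = \{\myout\}$ assumption, every agent's complete labelling $L_i$ satisfies $L_i(c) = \myout$ for every defeater $c \neq b$ of $a$, so \emph{Unanimity} (which AWPR satisfies by Corollary \ref{cor.postu}) forces $[M(\calL)](c) = \myout$ for every profile $\calL$. Next, mirroring the argument of Lemma \ref{lem:issue}, the three biconditionals above translate into vote-count equalities $|\{i : L_i(a) = \myin\}| = |\{i : L_i(b) = \myout\}|$, and analogously for $\myout$ and $\myundec$; applying the plurality definition on both sides transfers the biconditionals to $[M(\calL)](a)$ and $[M(\calL)](b)$, giving $[M(\calL)](a) = \myout \Leftrightarrow [M(\calL)](b) = \myin$, $[M(\calL)](a) = \myin \Leftrightarrow [M(\calL)](b) = \myout$, and $[M(\calL)](a) = \myundec \Leftrightarrow [M(\calL)](b) = \myundec$.

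It remains to verify the three collective-rationality conditions for $a$. If $[M(\calL)](a) = \myin$, then $[M(\calL)](b) = \myout$ and all other defeaters are collectively $\myout$, so both \emph{IN-CR1} and \emph{IN-CR2} hold. If $[M(\calL)](a) = \myout$, then $[M(\calL)](b) = \myin$ witnesses \emph{OUT-CR}. If $[M(\calL)](a) = \myundec$, then $[M(\calL)](b) = \myundec$ gives \emph{UNDEC-CR2} while all other defeaters being collectively $\myout$ gives \emph{UNDEC-CR1}. The main (mild) obstacle is the initial observation that $\calJ\calS(c) = \{\myout\}$ combined with votes being complete labellings and \emph{Unanimity} pins down $c$'s collective label; once that is in hand, the remaining reasoning is essentially a copy of the pattern used in Lemma \ref{lem:issue}.
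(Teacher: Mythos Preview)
Your proposal is correct and follows essentially the same approach as the paper: establish the three biconditionals between $L(a)$ and $L(b)$ from the complete-labelling conditions (using that every other defeater $c$ is forced to $\myout$), deduce $a \equiv_2 b$, then lift these to the collective level via the vote-count argument of Lemma~\ref{lem:issue} and check the CR sub-postulates for $a$. Your use of \emph{Unanimity} to pin all other defeaters to $\myout$ collectively is slightly stronger (and cleaner) than the paper's split into \emph{Unanimity} for the $\myin$ case and \emph{Supportiveness} for the $\myundec$ case, but the structure is the same.
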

\end{subequations}

\begin{corollary}\label{cor.js.col.1att}
Let $\AF=\langle \calA, \rightharpoonup \rangle$ be an argumentation framework, and $a \in \calA$ an argument. If $|a^-|=1$ then $a$ is always legally collectively labelled.
\begin{proof}
From Lemma \ref{lem.js.out-issue}, $a$ is always legally collectively labelled by AWPR.
\end{proof}
\end{corollary}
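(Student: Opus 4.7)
The plan is to derive this as an immediate application of Lemma \ref{lem.js.out-issue}. Since $|a^-|=1$, there is a unique defeater of $a$; call it $b$. I would take this $b$ to play the role of the argument $b$ in the statement of Lemma \ref{lem.js.out-issue}.

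Next, I would verify the hypothesis of Lemma \ref{lem.js.out-issue}, namely that for every $c \neq b$ with $c \rightharpoonup a$, $\calJ\calS(c) = \{\myout\}$. Because $b$ is the only defeater of $a$, there simply are no such arguments $c$, so the universally quantified statement holds vacuously. Thus the premise of the lemma is satisfied.

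Applying Lemma \ref{lem.js.out-issue} directly then yields that $a$ is always legally collectively labelled by AWPR, which is the conclusion we want. There is no real obstacle to surmount here: the corollary is an immediate degenerate case of the lemma, where the ``other defeaters with justification status $\{\myout\}$'' clause is empty. The only thing worth noting in the write-up is the vacuous-quantification argument, which makes it clear why the single-defeater case slots neatly into the framework of the preceding lemma.
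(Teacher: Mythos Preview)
Your proposal is correct and follows exactly the same approach as the paper, which also derives the corollary directly from Lemma~\ref{lem.js.out-issue}. Your explicit mention of the vacuous satisfaction of the universal condition on the remaining defeaters is a helpful clarification, but the underlying argument is identical.
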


Now we present the main theorem for this subsection. It says if all arguments have limited defeaters then AWPR always produces legally collective labellings. The limitation of the defeaters is characterised in both their number and their justification statuses. 

\begin{theorem}\label{thm.limDef}
Let $\AF=\langle \calA, \rightharpoonup \rangle$ be an argumentation framework. If each argument in $\calA$ has at most one defeater that can be labeled $\myundec$ then AWPR satisfies \emph{Collective Rationality}.
\begin{proof}
Suppose we have an $\AF=\langle \calA, \rightharpoonup \rangle$ s.t. each argument $a \in \calA$ has at most one defeater $b \in a^-$ s.t. $\myundec\in\calJ\calS(b)$. Then, using Theorem \ref{thm.js.io.u}:

\[\forall c \neq b:(c\rightharpoonup a \Rightarrow \calJ\calS(c)=\{\myin\} \vee \calJ\calS(c)=\{\myout\})\]

Now for each argument $a\in\calA$, all defeaters $c$ with $\calJ\calS(c)=\{\myout\}$ have no effect on the label of $a$, so one can remove these defeaters. Additionally, if one of the defeaters $c$ (other than $b$) has $\calJ\calS(c)=\{\myin\}$, then all other defeaters (including $b$) will also have no effect on the label of $a$, so one can also remove those defeaters. As a result, for each argument $a$ we will end up with one of the following:
\begin{itemize}
\item $a$ has only one defeater $b$ and $\myundec\in\calJ\calS(b)$, or
\item $a$ has only one defeater $c$ and $\calJ\calS(c)=\{\myin\}$, or
\item $a$ has no defeaters
\end{itemize}

Note that in the last case, we would have $\calJ\calS(a)=\{\myin\}$, and since AWPR satisfies \emph{Unanimity}, $a$ would be legally collectively labeled $\myin$. As for the first two cases, using Corollary \ref{cor.js.col.1att}, $a$ would be legally collectively labelled.
\end{proof}
\end{theorem}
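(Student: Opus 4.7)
The plan is to verify, for each argument $a \in \calA$, that the collective label $[M(\calL)](a)$ produced by AWPR is legal with respect to the labels assigned to the defeaters of $a$. By Proposition \ref{prop:Condorcet}, checking IN-CR, OUT-CR and UNDEC-CR argument by argument is enough to establish \emph{Collective Rationality}.

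The first step is to classify the possible justification statuses of defeaters. Since $\calJ\calS(c) = \emptyset$ is impossible and Theorem \ref{thm.js.io.u} rules out $\calJ\calS(c) = \{\myin,\myout\}$, every defeater $c \in a^-$ either has $\myundec \in \calJ\calS(c)$ or satisfies $\calJ\calS(c) \in \{\{\myin\},\{\myout\}\}$. The hypothesis states that at most one defeater of $a$ has $\myundec$ in its status, so all the other defeaters of $a$ must have status $\{\myin\}$ or $\{\myout\}$.

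The second step is a pruning argument. A defeater $c$ with $\calJ\calS(c) = \{\myout\}$ is labelled $\myout$ in every complete labelling, so every agent votes $L_i(c) = \myout$; by \emph{Unanimity}, $[M(\calL)](c) = \myout$, and such a defeater neither violates IN-CR2 or UNDEC-CR1 at $a$ nor is needed to witness OUT-CR. These defeaters can therefore be ignored. On the other hand, if some defeater $c$ of $a$ has $\calJ\calS(c) = \{\myin\}$, then every complete labelling forces $L_i(c) = \myin$ and hence $L_i(a) = \myout$; by \emph{Unanimity} we then get $[M(\calL)](a) = \myout$ and $[M(\calL)](c) = \myin$, which witnesses OUT-CR for $a$, with IN-CR and UNDEC-CR holding vacuously.

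After these reductions, three sub-cases remain for each argument $a$: (i) all original defeaters have $\calJ\calS = \{\myout\}$ (or $a$ has no defeaters), so $\calJ\calS(a) = \{\myin\}$ and \emph{Unanimity} gives $[M(\calL)](a) = \myin$, which is legal since no defeater is collectively $\myin$ or $\myundec$; (ii) $a$ has some defeater with $\calJ\calS = \{\myin\}$, already handled; or (iii) $a$ has exactly one defeater $b$ with $\myundec \in \calJ\calS(b)$, while every other defeater has $\calJ\calS = \{\myout\}$, which matches exactly the hypothesis of Lemma \ref{lem.js.out-issue}, so $a$ is legally collectively labelled. The main obstacle is the careful book-keeping in the pruning step, in particular making precise that a defeater with status $\{\myundec\}$ counts as an ``undec-capable'' defeater and therefore exhausts the single permitted such defeater, so the remaining defeaters really do have status $\{\myin\}$ or $\{\myout\}$; once this is in place the theorem follows by a direct case split combined with \emph{Unanimity} and Lemma \ref{lem.js.out-issue}.
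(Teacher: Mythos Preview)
Your proposal is correct and follows essentially the same approach as the paper: classify defeaters via Theorem~\ref{thm.js.io.u} into $\{\myin\}$, $\{\myout\}$, or ``$\myundec$-capable'', prune the $\{\myout\}$ ones, dispatch the $\{\myin\}$ case directly via \emph{Unanimity}, and handle the remaining case with Lemma~\ref{lem.js.out-issue}. The only minor difference is that the paper phrases the pruning as literally removing defeaters and then invokes Corollary~\ref{cor.js.col.1att} (the single-defeater case), whereas you keep the original graph and appeal to Lemma~\ref{lem.js.out-issue} directly; your version is slightly cleaner in that it avoids the informal graph-modification step.
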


\subsection{Relating the Two Restrictions}
In this section, we proposed classes of argumentation frameworks that guarantee collective rationality for AWPR. Note that neither of the two classes (given in Theorems \ref{thm.issue} and \ref{thm.limDef}) is a generalisation or a special case of the other. Example \ref{exmp.IssueNotlimDef} shows an $\AF$ that satisfies the condition in Theorem \ref{thm.issue} (i.e. disconnected issues), but violates the condition in Theorem \ref{thm.limDef} (i.e. limited defeaters), while Example \ref{exmp.limDefNotIssue} shows an $\AF$ that satisfies the condition in Theorem \ref{thm.limDef} (i.e. limited defeaters), but violates the condition in Theorem \ref{thm.issue} (i.e. disconnected issues).

\begin{example}\label{exmp.IssueNotlimDef}
Note that the argumentation framework in Figure \ref{fig:IssueNotlimDef} satisfies the condition in Theorem \ref{thm.issue}. All the arguments $a$, $b$, $c$, $d$, and $e$ are in the same issue, so this $\AF$ consists of disconnected issues (only one issue in this case). However, this $\AF$ violates the condition in Theorem \ref{thm.limDef}, since argument $a$ is defeated by two arguments $b$ and $c$, each of these defeaters has a justification status of $\{\myin,\myout,\myundec\}$, and so their justification statuses share $\myundec$. 
\end{example}

\begin{figure}[ht]
    \centering
  \includegraphics[scale=0.8]{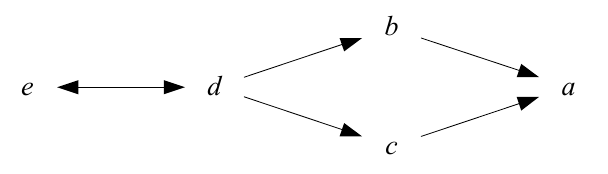}
  \caption{An $\AF$ that satisfies the condition in Theorem \ref{thm.issue}, but violates the condition in Theorem \ref{thm.limDef}.}\label{fig:IssueNotlimDef}
\end{figure}

\begin{example}\label{exmp.limDefNotIssue}
Note that the argumentation framework in Figure \ref{fig:limDefNotIssue} satisfies the condition in Theorem \ref{thm.limDef}. The only argument that is defeated by more than one argument is argument $a$ which has two defeaters $b$ and $c$. Moreover, $\myundec \notin \calJ\calS(c)$, so $\myundec \notin \calJ\calS(b) \cap \calJ\calS(c)$. However, this $\AF$ violates the condition in Theorem \ref{thm.issue}, since it contains two connected issues. The first issue is $\{a,b,d\}$ and the second issue is $\{c,e\}$.
\end{example}

\begin{figure}[ht]
    \centering
  \includegraphics[scale=0.8]{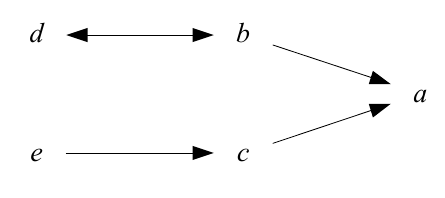}
  \caption{An $\AF$ that satisfies the condition in Theorem \ref{thm.limDef}, but violates the condition in Theorem \ref{thm.issue}.}\label{fig:limDefNotIssue}
\end{figure}

Note that although the two proposed conditions can seem strong, constructing weaker conditions is not an easy task. Consider the graph in Figure \ref{fig:counter_example}. It violates both conditions (it consist of three connected issues $\{a,a'\}$, $\{b,b'\}$ and $\{c\}$, and both defeaters of $c$ i.e. $a$ and $b$ have $\myundec$ in their justification status). However, this framework is very close to frameworks that satisfy one of the two conditions. For example, in the framework in Figure \ref{fig:counter_example}, if we remove the defeat $b \rightharpoonup b'$ we get a graph similar to the one in Figure \ref{fig:limDefNotIssue}, which satisfies the limited defeaters condition. On the other hand, if we remove the defeat $b\rightharpoonup c$ instead, we get a framework consisting of two disconnected issues, namely $\{a,a',c\}$ and $\{b,b'\}$. This suggests the difficulty of finding weaker conditions.

\section{Discussion}\label{section:conclusion}

In this paper, we presented an extensive analysis of social-choice-theoretic aspects of Dung's highly-influential argumentation semantics. 
Argumentation-based semantics have mainly been compared on the basis of how they deal with specific benchmark problems that reflect specific logical structures from the point of view of a single omniscient observer (\eg argument graph structures with odd-cycles, floating defeaters \etc). Recently, it has been argued that argumentation semantics must be evaluated based on more general intuitive principles \cite{baroni:giacomin:2007}. Our work can be seen as a contribution in this direction, focusing on issues relating to multi-agent preferences.

The closest work to the present paper is Caminada and Pigozzi \cite{caminada2011judgment}. In their work, they propose three aggregation operators, namely sceptical, credulous and super credulous. Although the operators satisfy \emph{Collective Rationality}, they violate \emph{Independence}. These operators are also more applicable to scenarios where the compatibility of the collective labelling with each individual's labelling is appreciated or needed. Argument-wise plurality rule, on the other hand, can be applied to classical scenarios where some individuals might naturally disagree with the opinion of the group. Additionally, unlike our work, their work focuses on the proposed operators with only little attention to the general aggregation problem. Only four postulates are proposed, namely \emph{Universal Domain}, \emph{Collective Rationality}, \emph{Anonymity}, and \emph{Independence}, and there are no general impossibility results that holds for \emph{any} operator.

Our results on the aggregation of different argument evaluations by multiple agents provide a new approach for conflict-resolution in multi-agent systems. While this work combines both arguing and voting, two processes that employ different procedures, we assume these two processes are done independently and by different groups of individuals. For example, a jury can vote on the evaluation of arguments that were laid down by the lawyers of two opposing sides. Thus, the arguing part, which happens between the lawyers occurs in an independent step before the voting step, on which our analysis focuses.


Our results contribute to research on aggregation in the context of argumentation. 
The social choice theoretic Arrovian properties have been analyzed in the context of social argument justification in \cite{tohme:etal:2008}. An \emph{extended} argumentation framework $AF^{n}=\langle {\cal A},\ \rightharpoonup_1, \ldots , \rightharpoonup_n \rangle$ is defined, where each $\rightharpoonup_i$, $1\leq i\leq n$, is a particular attack relation among the arguments in ${\cal A}$, representing different attack criteria. Then, the authors define an \emph{aggregate} argumentation framework $AF^{*}=\langle {\cal A},\ {\mathcal F}(\rightharpoonup_1, \ldots , \rightharpoonup_n )\rangle$, where ${\mathcal F}(\rightharpoonup_1, \ldots , \rightharpoonup_n)$ is an attack relation obtained by the aggregation of the individual attack criteria $\rightharpoonup_1, \ldots , \rightharpoonup_n$, via different kinds of mechanisms (e.g. majority voting, qualified voting and mechanisms that can be described by classes of decisive sets). The aggregation of individual attack criteria can not be assimilated to the kind of mechanisms proposed here. In \cite{tohme:etal:2008} an individual may sanction an attack between two given arguments while another individual may not, which in terms of labellings means that for the same pair of arguments there may exist the following two labellings: (\texttt{in}, \texttt{out}) and (\texttt{in}, \texttt{in}). This is impossible in our setting. Hence, the Arrovian properties (e.g. \emph{Collective Rationality}) are conceived differently.

In \cite{bodanza:auday:2009} the authors analyse the problem of aggregating different individual argumentation frameworks over a common set of arguments in order to obtain a unique socially justified set of arguments. One of the procedures considered there is one in which each individual proposes a set of justified arguments and then the aggregation leads to a unique set of socially justified arguments. The AWPR mechanism proposed here fits this procedure for the special case in which individually justified arguments are simply the sets of arguments labelled \texttt{in} for each individual.

There is much work on using an individual agent's preferences to help evaluate arguments (\eg based on given priorities over rules  \cite{prakken:sartor:1997}). But this line of work does not address the preferences of \emph{multiple} agents and how they may be aggregated. In other related work, Bench-Capon \cite{benchcapon:2003} associates arguments with values they promote or demote, and considers different audiences with different preferences over those values. Such preferences determine whether particular defeats among arguments succeed. Thus, one gets different argument graphs, one for each audience. Bench-Capon uses this to distinguish between an argument's \emph{subjective acceptance} with respect to a particular audience, and its \emph{objective acceptance} in case it is acceptable with respect to all possible audiences. Our work differs in two important ways. Firstly, in our framework, an agent (or equivalently, an audience) does not have preferences over individual arguments, but rather preferences over how to evaluate all arguments collectively (\ie over labellings). Secondly, our concern here is not with how individual agents (or audiences) accept an argument, but rather on the possibility of achieving important social-choice properties in the final aggregated labelling.

In relation to aggregation, Coste-Marquis et al explored the problem of aggregating multiple argumentation frameworks \cite{coste-marquis:etal:2007}. Each agent's judgment consists of a different argument graph altogether. This contrasts significantly with our work, in which agents do not dispute the argument graph, but rather how it must be evaluated/labelled. Our setting is more akin to a jury situation, in which all arguments have been presented by the prosecution and defense team, and are visible to the jury members. The jury members themselves do not introduce new arguments, but are tasked with aggregating their individual jugdgments about the arguments presented to them.

Finally, we refer to the work of Rahwan and Larson \cite{rahwan:larson:2008} on strategic behaviour when arguments are distributed among agents, and where these agents may choose to show or hide arguments. Thus, their interest is in how agents contribute to the \emph{construction} of the argument graph itself, which is then evaluated centrally by the mechanism (\eg a judge). In contrast, our work is concerned with how agents individually cast votes on how to evaluate each argument in a \emph{given} fixed graph.

Our work opens new research problems for the computational social choice community. As is the case with other aggregation domains, the aggregation paradox in argument evaluation is an example of a fundamental barrier. Thus the impossibility results are important because they give conclusive answers and focus research in more constructive directions (\eg weakening the desired properties in order to avoid the paradox). 
An algorithmic agenda would complement this research by providing efficient algorithms for such problems. Strategic manipulation, by mis-reporting one's true vote, is also an important area of investigation, especially when such manipulations are exercised by coalitions of agents.

\begin{acknowledgment}
We are grateful to Andrew Clausen for his insightful comments. We are especially grateful to Martin Caminada for discussions surrounding the second impossibility result. Much of this paper was written during a visit of Edmond Awad to the University of Luxembourg which was generously supported by SINTELNET. Richard Booth's work was carried out while working at the University of Luxembourg, and was supported by the FNR
(DYNGBaT project).
\end{acknowledgment}

\bibliographystyle{plain}
\bibliography{mybib}

\end{document}